\documentclass[11pt]{article}

\usepackage[preprint]{acl}
\usepackage{times}
\usepackage{latexsym}
\usepackage[T1]{fontenc}
\usepackage[utf8]{inputenc}
\usepackage{microtype}
\usepackage{inconsolata}
\usepackage{amsmath,amssymb,amsthm}
\usepackage{graphicx}
\usepackage{subcaption}
\usepackage{booktabs}
\usepackage{array}

\newtheorem{lemma}{Lemma}
\newtheorem{theorem}{Theorem}
\newtheorem{proposition}{Proposition}
\newtheorem{definition}{Definition}

\newcommand{\F}{\mathcal{F}}
\newcommand{\E}{\mathbb{E}}
\newcommand{\clip}{\mathrm{clip}}
\newcommand{\eps}{\varepsilon}
\newcommand{\Eu}{E^{+}}
\newcommand{\El}{E^{-}}
\newcommand{\Hzu}{H_{0}^{+}}
\newcommand{\alphap}{\alpha_{+}}
\newcommand{\alpham}{\alpha_{-}}

\title{Calibration-Free Early Stopping \\for Chain-of-Thought Reasoning via E-processes}

\author{
 \textbf{Dong-Jae Lee}\thanks{Equal contribution.} \quad
 \textbf{Gyojin Han}$^*$ \quad
 \textbf{Junmo Kim} \quad
\\
 \textsuperscript{}KAIST
\\
    \texttt{\{jhtwosun,
    hangj0820,
    junmo.kim\}@kaist.ac.kr}
}

\begin{document}
\maketitle

\begin{abstract}
Reasoning large language models (LLMs) benefit from test-time scaling, motivating adaptive policies that spend tokens when they help and stop early otherwise. Existing rules halt when a confidence or stability signal crosses a threshold, but such thresholds rarely transfer across models and datasets, and validation-calibrated rules lose their guarantee under shifts in test-time horizon or signal dynamics. We reframe adaptive stopping as sequential evidence accumulation along the reasoning trajectory rather than threshold calibration on held-out data. Our framework runs two e-processes on a shared per-chunk signal: a commit stream tracks the signal's position relative to its slow trend, and an abstain stream tracks its signed velocity, making the two anti-correlated by construction. Under signal-level nulls on the per-chunk increments, Ville's inequality yields an anytime-valid signal-level false-alarm bound at user-specified significance levels, with no held-out calibration. Across various model--dataset combinations, the framework matches or exceeds calibration-based stopping on the token--accuracy Pareto front using zero calibration data, and the same significance level transfers unchanged across cells where calibration becomes infeasible.
\end{abstract}

\section{Introduction}

Reasoning large language models (LLMs)~\cite{yang2025qwen3, guo2025deepseek} benefit from test-time scaling: allocating more reasoning tokens often improves performance on mathematical, scientific, and symbolic tasks~\cite{snell2025scaling, brown2024large}. Long chain-of-thought generation has accordingly become a standard inference-time tool, but it also creates a compute-allocation problem. Some instances are solved early and continue to consume tokens after the answer has effectively stabilized; others exhaust the full budget without making useful progress. The central challenge is to make models reason adaptively: spending computation when it improves reliability, and stopping when further reasoning is unlikely to change the answer.

Recent adaptive-reasoning methods \cite{yang2025deer, fu2025deepconf, aggarwal2023adaptive, li2024esc} instantiate this idea by monitoring a confidence or stability signal during generation and halting once the signal crosses a threshold.
This converts a fixed-budget policy into an instance-dependent stopping rule, but only by trading budget selection for threshold selection: thresholds on confidence, entropy, or stability rarely transfer across models and datasets.
Risk-control approaches such as Conformal Thinking (CT) \cite{wang2026conformal} address this by calibrating thresholds to a user-specified error rate on a held-out validation set.
The guarantee, however, is conditional on the calibration distribution matching deployment: adaptive reasoning runs across many combinations of models, benchmarks, and prompt formats, where per-cell calibration is often unavailable or infeasible at strict risk targets under modest validation budgets.
A source-calibrated rule continues to run when transferred to a new cell, but its guarantee no longer applies.

The difficulty is structural: a chain-of-thought trace is not a single prediction but a sequence of dependent intermediate states, each offering another opportunity to stop.
The event to control is therefore whether an unsafe stopping rule ever fires along the trajectory, not whether a single held-out example crosses a threshold.
Calibrating before deployment ties the rule to one cell's statistics.
When the test-time horizon, signal dynamics, or instance mix shifts, the calibrated mapping no longer matches the trajectory at hand.
A stopping rule must therefore handle the sequential structure of the trajectory without depending on calibration data that may be unavailable or fail to transfer.

We address both requirements by introducing e-processes \cite{ramdas2023game, 10.1111/rssa.12647} to the chain-of-thought stopping problem: sequential evidence accumulators whose error guarantees hold at every stopping time, with no held-out calibration.
We instantiate them as CES (Calibration-Free Early Stopping), a sequential-evidence framework for adaptive chain-of-thought stopping.
Rather than testing a raw signal against a calibrated threshold, CES accumulates evidence along the reasoning trajectory and stops the first time this evidence is sufficient to either commit to or rule out the current answer.
The strength of evidence is governed by a user-controlled significance level $\alpha$: under a signal-level null on the per-chunk increment, Ville's inequality \cite{ville1939etudecritique} bounds the false-alarm probability over the entire trajectory by $\alpha$ at any stopping time.
Because this bound is signal-level, the rule requires no held-out calibration, and $\alpha$ retains its meaning across models, datasets, and token budgets.

We further propose a dual construction with two e-processes operating on a shared signal $s_t$, which can be any of the trajectory statistics already explored in adaptive-reasoning work (e.g., confidence, entropy, answer stability).
The two streams target complementary decisions: the commit stream tracks the position of $s_t$ relative to its own slow trend, and the abstain stream tracks its signed velocity.
They are anti-correlated by construction: an ascending $s_t$ feeds commit evidence and depletes abstain evidence in the same step, so they rarely fire together.
Together, the streams form a symmetric exit rule on the shared signal, so the stopping decision does not depend on any single mapping from signal to correctness.

Our contributions are: (i) We introduce CES, a sequential-evidence framework for adaptive chain-of-thought stopping that provides an anytime-valid signal-level guarantee via Ville's inequality, with a single user-controlled confidence parameter $\alpha$ and no held-out calibration data.
(ii) We propose a dual e-process construction on a shared signal that handles commit and abstain symmetrically under a single decision rule, using a position-velocity decomposition that makes the two streams anti-correlated by construction.
(iii) We show that CES matches or exceeds Conformal Thinking's token--accuracy Pareto across various model--dataset combinations using zero calibration data, and applies unchanged under cross-cell transfer where CT's calibration becomes infeasible at strict risk targets or fails to transfer.

\section{Related Work}

\paragraph{Adaptive stopping for LLM reasoning.}
Recent work modulates test-time compute by monitoring an intermediate signal: token-level entropy or top-1 probability \citep{zhang2025entroduction, yang2025deer}, answer-token entropy at intermediate predictions \citep{wang2025eat}, intermediate-answer self-consistency \citep{wang2023selfconsistency, aggarwal2023adaptive, li2024esc}, learned probes \citep{ni2025uheads}, or budget-aware policies \citep{snell2025scaling,fu2025deepconf}.
These methods either pre-set a threshold heuristically or calibrate it on a held-out set.

\paragraph{Conformal prediction for LLMs.}
Conformal prediction methods derive split-conformal guarantees from a held-out calibration set \citep{angelopoulos2023gentle,abbasi-yadkori2024-conformal-abstention}.
Conformal Thinking \citep{wang2026conformal} adapts split conformal to the reasoning-trace setting with a dual upper/lower stopping rule controlled by a UCB-Hoeffding margin.
The deployed bound is a fixed-time empirical guarantee tied to the calibration distribution; the bound does not transfer to a domain with shifted trace length or signal dynamics.

\paragraph{E-processes and anytime-valid inference.}
E-processes~\cite{ramdas2023game, 10.1111/rssa.12647} are sequential evidence accumulators built from a betting metaphor: a fictitious gambler stakes a fraction of current wealth against the null hypothesis at each step. Under $H_0$, the gambler cannot profit in expectation, so large accumulated wealth constitutes evidence against $H_0$. Ville's inequality~\citep{ville1939etudecritique} formalizes this: for any nonnegative supermartingale, the probability of the wealth \emph{ever} exceeding $1/\alpha$ is at most $\alpha$, at any stopping time~\citep{ramdas2023game,10.1111/rssa.12647}. This time-uniformity distinguishes e-processes from classical $p$-values, which require a fixed sample size and break under adaptive stopping.
The betting fraction controls the tradeoff between power and conservatism. Various predictable strategies exist, ranging from fixed values to adaptive schemes such as ONS and aGRAPA~\citep{waudbysmith2022estimatingmeansboundedrandom} that maximize expected log-wealth growth without prior knowledge of the signal distribution.
We instantiate this framework for chain-of-thought stopping by defining a chunk-level filtration and mapping trajectory signals to bounded e-value
increments via an EMA-deviation transform.
For more formal background, please refer to Appendix~\ref{sec:prelim}.

\section{Method}
\label{sec:method}

\subsection{Setup}
We consider a reasoning LLM that produces a chain-of-thought trace $r_{1:T} = (r_1, \ldots, r_T)$ of $T$ chunks for a query $x$ with gold answer $y$.
After observing $r_{1:t}$, the model yields a running prediction $f_t(x)$ and an auxiliary scalar $s_t \in \mathbb{R}$ that summarizes the per-step signal associated with $f_t$. The scalar $s_t$ may be an answer confidence, the stability of $f_t$ across recent steps, or any other primary signal used by an early-exit policy.
Prior work focuses on designing better primary signals \cite{fu2025deepconf, wang2025eat}; we instead study the early-exit policy that consumes them.

An early-exit policy selects a stopping time $\tau \le T$ and takes one of two actions: (i)~\emph{commit}, returning $f_\tau(x)$ when further reasoning is judged unnecessary; or (ii)~\emph{abstain}, deferring to a fallback when the instance appears unsolvable within the full budget. Both actions reduce token usage relative to running to $T$ while intermediate probing costs are explicitly accounted for in our evaluation.

\subsection{Goal}
Reasoning horizons and signal dynamics vary across models, datasets, and token budgets, so any rule tuned on a held-out set is unlikely to transfer. Models also encounter instances beyond their capacity, far from the validation distribution, where additional reasoning consumes tokens without improving the answer. The optimal threshold
likewise depends on these factors, so no fixed value reliably trades accuracy for efficiency.

We therefore design an early-stopping rule with three properties: (i)~no dependence on a held-out calibration set; (ii)~symmetric handling of commit and abstain under a single decision rule; and (iii)~user-specified control over the significance level $\alpha$.

\subsection{E-Process Construction}
Instead of asking whether a raw signal $s_t$ has crossed a calibrated threshold, we test whether the trajectory has accumulated sequential evidence in favor of stopping. To this end, we construct dual e-processes: $E^{+}_t$ accumulates evidence for commit, and $E^{-}_t$ accumulates evidence for abstain. Crossing the upper boundary indicates that further reasoning is redundant; crossing the lower boundary indicates that further reasoning is unlikely to improve the answer and triggers the fallback policy.

Each stream maps $s_t$ to a bounded evidence increment $\phi^{m}_t \in [-1, 1]$ for $m \in \{+, -\}$ and updates a nonnegative wealth process
\begin{equation}
E^{m}_t = \prod_{i=1}^{t} \bigl(1 + \lambda^{m}_i \phi^{m}_i\bigr),
\label{eq:savi_wealth_process}
\end{equation}
with $E^{m}_0 = 1$.
The betting fraction $\lambda^{m}_i \ge 0$ is predictable---it depends only on $\mathcal{F}_{i-1}$---and satisfies $1 + \lambda^{m}_i \phi^{m}_i \ge 0$ so that $E^{m}_t$ remains nonnegative. In practice, $\lambda^{m}_i$ can be fixed, selected from a small portfolio, or set adaptively (Section~\ref{sec:betting}).

We define a signal-level null hypothesis corresponding to the absence of systematic positive evidence drift by requiring the evidence increment to have nonpositive conditional mean,
\begin{equation}
H^{m}_0: \quad \mathbb{E}\!\bigl[\phi^{m}_t \,\big|\, \mathcal{F}_{t-1}\bigr] \le 0.
\end{equation}
Since $\lambda^{m}_t$ is $\mathcal{F}_{t-1}$-measurable, taking conditional expectation in Eq.~\ref{eq:savi_wealth_process} under $H^{m}_0$ gives
\begin{align}
\mathbb{E}\!\bigl[E^{m}_t \,\big|\, \mathcal{F}_{t-1}\bigr]
&= E^{m}_{t-1}\bigl(1 + \lambda^{m}_t\, \mathbb{E}[\phi^{m}_t \mid \mathcal{F}_{t-1}]\bigr) \nonumber\\
&\le E^{m}_{t-1},
\end{align}
so $E^{m}_t$ is a nonnegative supermartingale starting at $1$ under the null.

For any such process, Ville's inequality~\citep{ramdas2023game} bounds the probability that it ever exceeds a level $\alpha_m > 0$:
\begin{equation}
\Pr_{H^{m}_0}\!\left(\sup_{t} E^{m}_t \ge 1/\alpha_m\right) \le \alpha_m.
\end{equation}
Unlike Markov's inequality, the supremum is taken over the entire trajectory rather than at a fixed $t$, so the bound holds at \textit{every stopping time}, including those chosen adaptively from the current data. This anytime validity allows us to monitor $E^{m}_t$ continuously without inflating the false alarm rate. Then, we define the per-stream stopping time
\begin{equation}
\tau_m = \inf\{t : E^{m}_t \ge 1/\alpha_m\}.
\end{equation}
Stopping at $\tau_m$ therefore raises a false alarm with probability at most $\alpha_m$, uniformly in $t$.

\subsection{Position-Velocity Decomposition}
What remains is to choose the signal $s_t$ and its mapping to the bounded evidence $\phi^{m}_t$. Rather than designing a separate signal for each stream, we use a single $s_t$ and extract two complementary statistics: its position relative to a slow trend, and its velocity.

\paragraph{Commit increment (position).}
Let
\begin{equation}
    e_t = \gamma s_t + (1-\gamma)\, e_{t-1},
    \qquad 0 < \gamma \ll 1,
\end{equation}
denote a slow exponential moving average (EMA) of $s_t$. The commit increment is the clipped deviation from this trend,
\begin{equation}
    \phi^{+}_t = \mathrm{clip}(s_t - e_t,\, -1,\, 1).
    \label{eq:savi_commit_phi}
\end{equation}
The commit stream accumulates evidence when $s_t$ exceeds its slow trend, removing the need for an absolute threshold on $s_t$, a quantity whose scale shifts across models and datasets.

\paragraph{Abstain increment (velocity).}
The abstain increment is the signed velocity of $s_t$,
\begin{equation}
    \phi^{-}_t = \mathrm{sign}(s_{t-1} - s_t),
    \label{eq:savi_abstain_phi}
\end{equation}
positive when $s_t$ declines, negative when it rises, and zero otherwise. The abstain stream therefore accumulates evidence when the trajectory loses stability.

The two increments are anti-correlated by construction:
$\phi_t^+$ is non-decreasing in $s_t$, whereas $\phi_t^-$ is non-increasing in $s_t$; consequently, their bet-weighted increments are conditionally anti-correlated.

\subsection{Betting Rate}
\label{sec:betting}
Another design choice is the betting fraction $\lambda^{m}_t$. Two requirements constrain it: $\lambda^{m}_t$ must be predictable so that Ville's bound applies, and it should adapt to the realized drift of $\phi^{m}_t$---unknown a priori---without any held-out tuning. We adopt the aGRAPA~\citep{waudbysmith2022estimatingmeansboundedrandom},
\begin{equation}
\lambda^{m}_t = \mathrm{clip}\!\left(\frac{\sum_{s < t} \phi^{m}_s}{1 + \sum_{s < t} (\phi^{m}_s)^2},\; 0,\; \lambda_{\max}\right),
\label{eq:betting}
\end{equation}
with $\lambda_{\max} = 1/2$ for $1 + \lambda^{m}_t \phi^{m}_t \ge 1/2 > 0$. The rate depends only on $\phi^{m}_{<t}$, so it is $\mathcal{F}_{t-1}$-measurable. Under strong positive drift the numerator $\sum_{s < t} \phi^{m}_s$ grows, $\lambda^{m}_t$ saturates near $\lambda_{\max}$, and $E^{m}_t$ accelerates, while under weak drift $\lambda^{m}_t$ stays near zero and evidence accrues slowly.

\subsection{Dual E-Process Early Exits}
Combining the per-stream stopping times $\tau_+$ and $\tau_-$ with the budget $T$, the overall exit time is
\begin{equation}
    \tau = \min\{\tau_+,\, \tau_-,\, T\}.
\end{equation}
The action depends on which boundary triggers first: the policy commits to $f_\tau(x)$ if $\tau = \tau_+$, abstains if $\tau = \tau_-$, and returns the full-budget answer $f_T(x)$ if neither boundary is crossed before $T$.

\subsection{Sensitivity Control and Guarantee}

The significance levels $\alpha_+, \alpha_-$ are user-specified parameters that control the commit and abstain thresholds.
Each $\alpha_m$ acts as a monotone sensitivity knob.
A smaller $\alpha_+$ raises the threshold $1/\alpha_+$, delays $\tau_+$ on average, and yields fewer commits, more tokens, and (empirically) a lower task-level false-positive rate; the same monotonicity holds for $\alpha_-$ on the abstain stream.
The pair $(\alpha_+, \alpha_-)$ therefore controls the commit/abstain trade-off without calibration data; the formal guarantee is signal-level (see Appendix~\ref{sec:appen:theory}), and its empirical task-level behavior is reported in Section~\ref{sec:exp-sc}.

Two invariances make the $\alpha$-based control portable.
The bound is anytime-valid: it holds uniformly over the entire trajectory, so the rule can inspect $E_t^m$ at every chunk and stop at the first crossing without inflating the error rate.
The bound is also distribution-free, depending only on user-selected confidence level $\alpha_m$; combined with the local-trend normalization of $\phi_t^m$, the same $\alpha$ governs decisions across models, datasets, and token budgets without calibration data.

\begin{figure*}[t]
\centering
\includegraphics[width=0.99\textwidth]{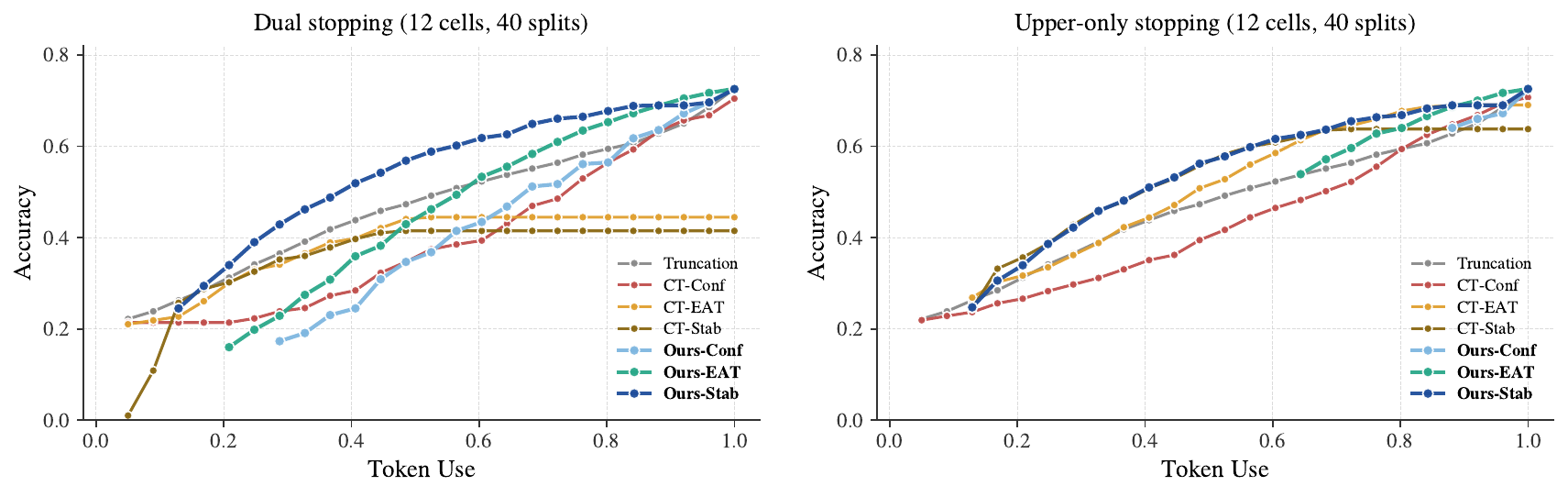}
\caption{Token-use vs.\ accuracy Pareto curves aggregated over all 12 model--dataset combinations. The left panel shows dual stopping, where early commit and early abstain are both active so the model can either commit to an intermediate answer or defer to a fallback policy before the trajectory ends. The right panel shows upper-only stopping, where only early commit is applied and the model falls back to the terminal answer otherwise.}
\label{fig:fig5}
\end{figure*}

\begin{table*}[t]
\small\centering
\resizebox{\textwidth}{!}{
\renewcommand{\arraystretch}{1.2}
\setlength{\tabcolsep}{3pt}
\begin{tabular}{l|ccc|ccc|ccc|ccc}
\toprule
& \multicolumn{6}{c|}{\textbf{Dual stopping}} & \multicolumn{6}{c}{\textbf{Upper-only stopping}} \\
\textbf{TokU} & \textbf{CT-Conf} & \textbf{CT-EAT} & \textbf{CT-Stab} & \textbf{Ours-Conf} & \textbf{Ours-EAT} & \textbf{Ours-Stab} & \textbf{CT-Conf} & \textbf{CT-EAT} & \textbf{CT-Stab} & \textbf{Ours-Conf} & \textbf{Ours-EAT} & \textbf{Ours-Stab} \\
\midrule
$\le 0.2$ & 0.214 & 0.286 & 0.302 & 0.131 & 0.158 & \textbf{0.337} & 0.275 & 0.286 & 0.310 & ---   & ---   & \textbf{0.317} \\
$\le 0.3$ & 0.243 & 0.347 & 0.352 & 0.180 & 0.235 & \textbf{0.441} & 0.310 & 0.364 & \textbf{0.408} & ---   & ---   & \textbf{0.408} \\
$\le 0.4$ & 0.280 & 0.398 & 0.397 & 0.245 & 0.358 & \textbf{0.517} & 0.363 & 0.453 & 0.485 & ---   & ---   & \textbf{0.491} \\
$\le 0.5$ & 0.357 & 0.441 & 0.415 & 0.364 & 0.444 & \textbf{0.574} & 0.408 & 0.545 & 0.538 & ---   & ---   & \textbf{0.567} \\
$\le 0.6$ & 0.394 & 0.445 & 0.415 & 0.434 & 0.530 & \textbf{0.616} & 0.502 & 0.595 & 0.614 & ---   & 0.508 & \textbf{0.637} \\
$\le 0.7$ & 0.486 & 0.445 & 0.415 & 0.512 & 0.594 & \textbf{0.656} & 0.557 & 0.628 & 0.631 & ---   & 0.591 & \textbf{0.637} \\
$\le 0.8$ & 0.562 & 0.445 & 0.415 & 0.564 & 0.653 & \textbf{0.677} & 0.589 & 0.683 & 0.641 & 0.611 & 0.636 & \textbf{0.690} \\
$\le 0.9$ & 0.645 & 0.445 & 0.415 & 0.662 & \textbf{0.697} & 0.690 & 0.654 & 0.683 & 0.641 & 0.670 & \textbf{0.700} & 0.690 \\
$\le 1.0$ & 0.705 & 0.445 & 0.415 & \textbf{0.726} & \textbf{0.726} & \textbf{0.726} & 0.707 & 0.683 & 0.641 & \textbf{0.726} & \textbf{0.726} & \textbf{0.726} \\
\bottomrule
\end{tabular}
}
\caption{Accuracy at each token budget, averaged over 40 splits and 12 cells. CT methods use $n_{\mathrm{val}}=50$ while Ours uses zero calibration data. CT-Stab applies CT's calibration rule to the same stability signal used by Ours. {---}: no Pareto operating point exists at or below that budget (the e-process commit threshold was not crossed at any $\alpha$ in the sweep). \textbf{Bold}: highest value in each half-row.}
\label{tab:fig5}
\end{table*}

\section{Experiments}

\subsection{Experimental Setup}

\paragraph{Models and datasets.}
We apply our method to three reasoning LLMs: Qwen3-8B, Qwen3-30B-A3B-Instruct-2507~\citep{yang2025qwen3}, and DeepSeek-R1-Distill-Qwen-32B~\citep{guo2025deepseek}. We evaluate across four mathematical reasoning benchmarks: AIME, MATH-500~\citep{lightman2023verify}, OlympiadBench~\citep{he-etal-2024-olympiadbench}, and DeepScaler~\citep{luo2025deepscaler}. AIME comprises 1,011 competition problems drawn from the American Invitational Mathematics Examination spanning 1983 to 2024. MATH-500 is a 500-problem subset of the MATH benchmark covering a broad range of high-school competition topics. OlympiadBench is a multilingual olympiad-level benchmark, from which we use the English math subset of 674 problems. DeepScaler is a curated collection of 1,189 challenging math problems designed to stress-test advanced reasoning models. Token use includes the reasoning prefix together with every forcing-suffix prefill and intermediate-answer decode performed up to the stopping boundary. CT-EAT and CT-Stab incur the same probes as their CES counterparts, while truncation and confidence-only methods do not require answer probes. Position-aware FLOPs accounting and detailed overhead statistics are reported in the supplementary material.

\paragraph{Trace collection.}
Each problem is presented to the model with the instruction to reason step by step, separating reasoning steps by blank lines, and to write the final answer inside $\backslash$\texttt{boxed\{\}}. Trajectories are generated using vLLM~\citep{kwon2023efficient} following the official sampling parameters recommended by each model's authors; for Qwen3 models we use $T{=}0.6$, $p{=}0.95$, $k{=}20$ (with $T{=}0.7$, $p{=}0.8$ for the Qwen3-30B-A3B variant), and for DeepSeek-R1 we use $T{=}0.6$, $p{=}0.95$. The output text is segmented into reasoning chunks at blank-line boundaries.

\paragraph{Intermediate predictions and signals.}
At each chunk boundary, we append a forcing suffix (\texttt{\textbackslash n\textbackslash n**Final Answer**\textbackslash n\textbackslash boxed\{}) to the partial trace and complete greedily ($T{=}0$, ${\le}24$ tokens) to extract an intermediate boxed prediction $\hat{y}_t$. Three scalar signals are derived from the partial trace at each boundary: Confidence~\citep{yang2025deer} (geometric mean of per-token probabilities), EAT~\citep{wang2025eat} (inverted first-answer-token entropy), and Stability (normalized run length of consecutive equal predictions). Each signal is smoothed with a causal EMA ($\alpha{=}0.3$) before entering the e-process; the bet-construction EMA uses $\gamma{=}0.05$. Formal definitions are in Appendix~\ref{sec:signal}.

\paragraph{Protocol.}
We follow CT's~\citep{wang2026conformal} evaluation protocol: 40 random splits, validation size $n_{\mathrm{val}} = 50$, test capped at 800. For ours, the validation set is unused at deployment; for CT, it is used to UCB-calibrate the upper threshold $\tau$ (and the lower sigmoid steepness $c$ in the dual variant). Sweeps cover $\alpha \in \{10^{-4}, \ldots, 0.95\}$ for ours and target risk $\eps \in \{0.05, 0.10, \ldots, 1.00\}$ for CT.

\begin{figure*}[t]
\centering
\begin{subfigure}[t]{0.49\textwidth}
\includegraphics[width=\textwidth]{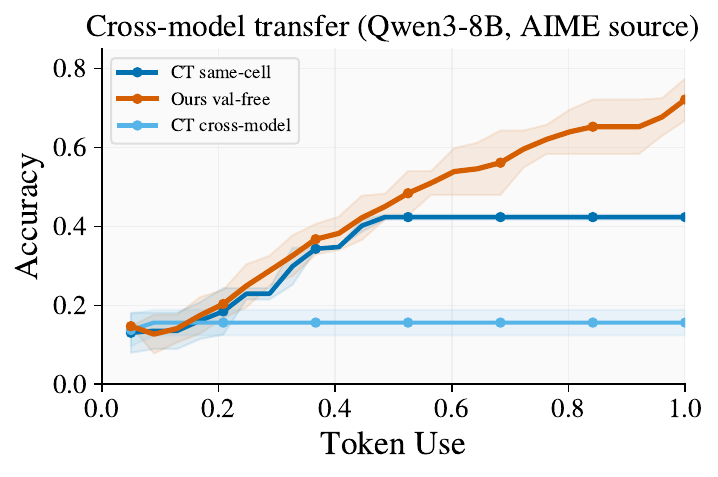}
\caption{Cross-model: different models on AIME.}
\end{subfigure}
\hfill
\begin{subfigure}[t]{0.49\textwidth}
\includegraphics[width=\textwidth]{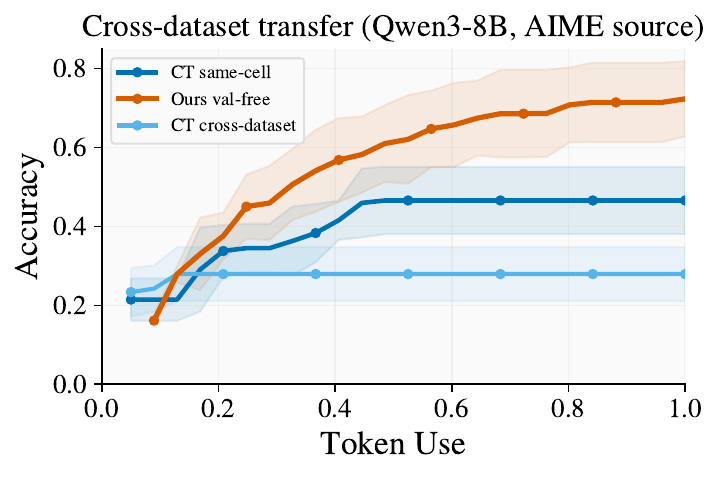}
\caption{Cross-dataset: Qwen3-8B on other datasets.}
\end{subfigure}
\caption{Token-use vs.\ accuracy Pareto under cross-combination deployment with Qwen3-8B/AIME as the fixed calibration source. CT same-cell and the proposed method are evaluated on the same target cells as reference envelopes. CT cross-cell performance degrades substantially when the calibration distribution does not match the target cell, whereas the proposed method operates identically without re-calibration.}
\label{fig:cross-cell}
\end{figure*}

\subsection{Experiment Results}

We evaluate the proposed method against CT and a truncation baseline across all 12 model--dataset combinations (3 models $\times$ 4 datasets), reporting accuracy--token-use Pareto envelopes aggregated over all cells, for both upper-only stopping and dual stopping settings. Each curve sweeps over the method's parameter grid. Alongside our main recipe (stability signal), we also plug the signals evaluated in CT (EAT~\citep{wang2025eat} and confidence~\citep{yang2025deer}) directly into our e-process framework, and conversely plug the stability signal into CT's framework, allowing a controlled signal--framework comparison.

As shown in Figure~\ref{fig:fig5} and Table~\ref{tab:fig5}, the proposed method with stability signal clearly dominates CT across the majority of the token budget range in both settings, despite using zero calibration data where CT uses 50 validation trajectories per split. The advantage is consistent rather than concentrated at a single operating point. Using the signals evaluated in CT inside our e-process underperforms relative to the stability signal. This gap arises because EAT and confidence signals tend to plateau near the upper end of $[0,1]$ once a reasoning chain is on track, producing a conditional mean that is approximately stationary under $\Hzu$. Such stationarity is well-suited to CT's sigmoid threshold, which compares the signal against a fixed level at each step. In our e-process, the wealth increment at each step measures how much the current signal deviates from its recent exponential moving average. When the signal is stationary, these deviations are near zero, the wealth accumulates negligibly, and the e-process rarely accumulates enough wealth to reach its crossing threshold. The stability signal avoids this problem by construction: it tracks changes in prediction consistency across chunks, so it continues to emit informative increments even late in the chain when EAT and confidence have already flattened. The truncation baseline, which stops generation at a fixed fraction of the token budget without any signal, is consistently dominated by the proposed method across the practical operating range, confirming that the accuracy gains stem from evidence-driven stopping rather than naive budget truncation. Per-combination Pareto curves are provided in Figure~\ref{fig:fig5-per-cell} in Appendix~\ref{appx:exp}.

\subsection{Cross-Combination Deployment}

We evaluate cross-cell generalization by fixing Qwen3-8B/AIME as the calibration source and deploying to all other model--dataset combinations. As shown in Figure~\ref{fig:cross-cell}, CT requires fitting a threshold $\tau$ on a held-out calibration set drawn from the deployment distribution, so reusing a source-calibrated $\tau$ on a new target combination breaks the calibration bound. The transferred $\tau$ is systematically miscalibrated: in the low-risk regime calibration is often infeasible under $n_{\mathrm{val}}=50$, and at higher risks the threshold is over-conservative, consuming more tokens than necessary.

The proposed method requires no calibration data at any stage: the commit and abstain decisions depend only on the current trajectory's e-process wealth, so the same $\alpha$ applies identically to any target cell without re-calibration. As a result, the proposed method achieves a consistently better token--accuracy trade-off across the full operating range in both cross-model and cross-dataset settings.

\begin{figure*}[t]
\centering
\begin{subfigure}[t]{0.71\textwidth}
  \includegraphics[width=\textwidth]{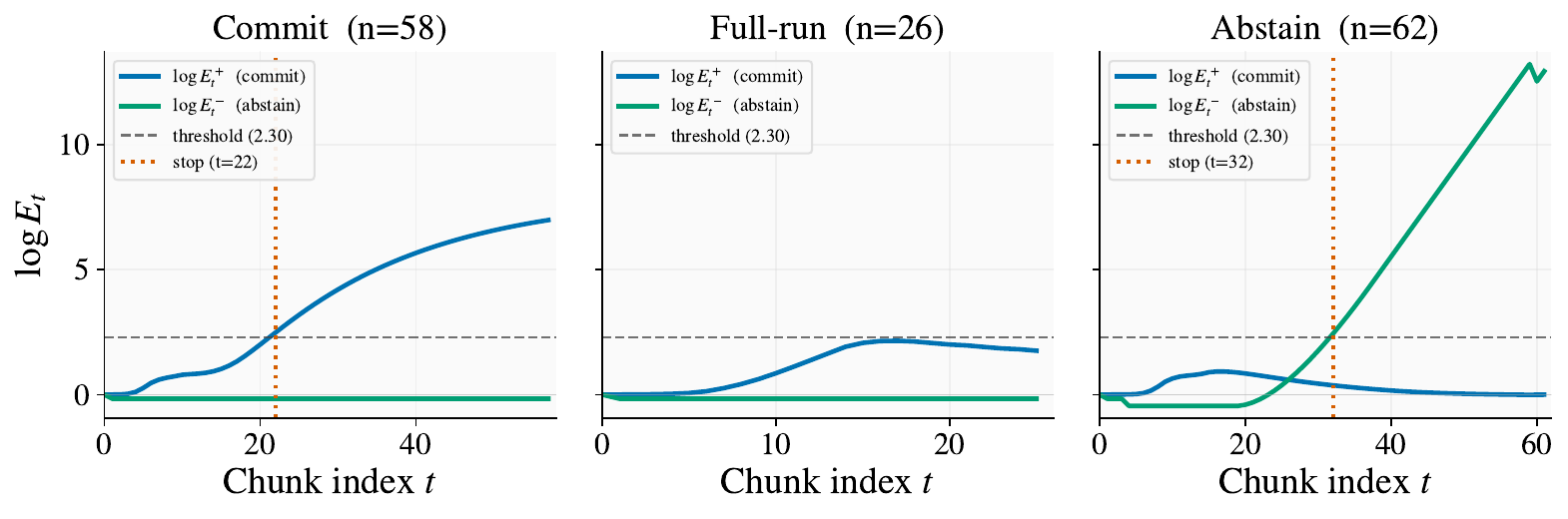}
  \caption{Trajectory outcomes ($\alpha{=}0.1$).}\label{fig:traj-examples}
\end{subfigure}
\hfill
\begin{subfigure}[t]{0.26\textwidth}
  \includegraphics[width=\textwidth]{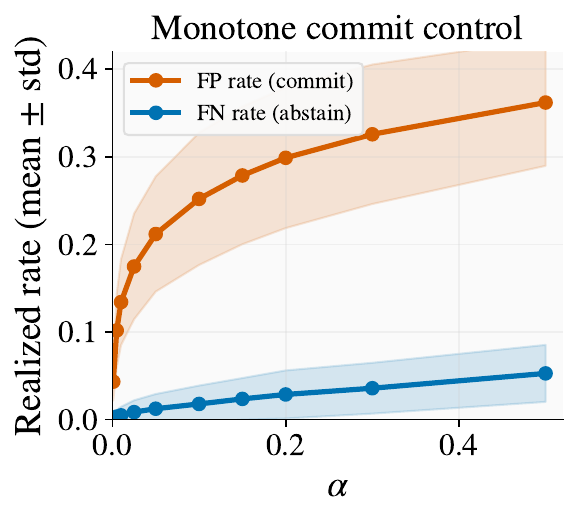}
  \caption{FP/FN rates vs.\ $\alpha$.}\label{fig:sc3}
\end{subfigure}
\caption{E-process dynamics. \textbf{(a)} Three canonical trajectory outcomes at $\alpha{=}0.1$: $\log\Eu_t$ rises and crosses the threshold for a correct early prediction (\emph{commit}); neither stream crosses for a hard instance that exhausts the full token budget (\emph{full-run}); $\log\El_t$ crosses first, correctly flagging a trajectory that would have produced a wrong terminal answer (\emph{abstain}). \textbf{(b)} Realized commit FP rate and abstain FN rate as $\alpha$ is swept across all 12 model--dataset combinations; both decrease monotonically, confirming that $(\alphap, \alpham)$ provide calibration-free monotone knobs on the two error types.}
\end{figure*}

\begin{figure*}[t]
\centering
\includegraphics[width=0.98\textwidth]{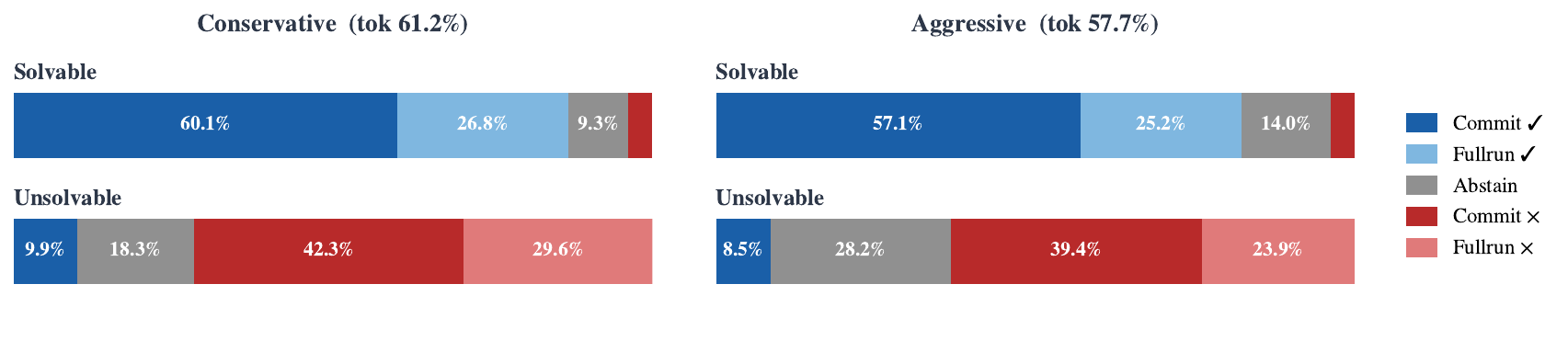}
\caption{Stopping outcome distribution for Qwen3-8B on MATH-500 under two operating points. Each bar shows the fraction of trajectories in each
  outcome category, further decomposed into correct and incorrect predictions, broken down by problem solvability.}
\label{fig:stopping-ratio}
\end{figure*}

\subsection{E-process Dynamics}
\label{sec:exp-sc}

\paragraph{Trajectory outcomes.}
We visualize representative trajectories to illustrate how the two wealth processes evolve across the three possible stopping outcomes. As shown in Figure~\ref{fig:traj-examples}, we present three cases at $\alpha{=}0.1$. In the commit case, $\log\Eu_t$ rises steadily as the intermediate answers stabilize and crosses the threshold well before the final chunk, yielding an early correct prediction. In the full-run case, neither wealth process crosses the threshold: the model uses all available tokens and returns the terminal answer. In the abstain case, $\log\El_t$ crosses the threshold while $\log\Eu_t$ remains low throughout, flagging a trajectory on which the stability signal fails to converge.

\paragraph{Monotone commit control.}
Figure~\ref{fig:sc3} plots the realized FP rate (commit stream) and FN rate (abstain stream) as $\alpha$ is swept across all 12 model--dataset combinations. Both rates decrease monotonically as $\alpha$ decreases, confirming that $\alphap$ and $\alpham$ provide independently monotone knobs on the two error types without requiring calibration data.

\paragraph{Stopping outcome distribution.}

We analyze the distribution of trajectory outcomes at two operating points to characterize how the commit and abstain streams behave in practice. As shown in Figure~\ref{fig:stopping-ratio}, we report results for Qwen3-8B on MATH-500 under conservative and aggressive parameter settings, decomposing each outcome into whether it results in a correct or incorrect prediction. Early commit dominates in both settings, and the large majority of commits on solvable problems are correct, confirming that compute is reclaimed without sacrificing accuracy. On unsolvable problems, early commit captures a non-trivial fraction of correct predictions that would otherwise be lost to overthinking under a full-run policy, suggesting that the upper e-process fires precisely in windows where the model has already reached a correct intermediate answer before the reasoning chain deteriorates. The abstain stream exhibits a desirable asymmetry: unsolvable problems abstain at a substantially higher rate than solvable ones, indicating that the lower e-process selectively identifies likely-incorrect trajectories rather than firing indiscriminately. Moving from the conservative to the aggressive operating point raises the unsolvable-abstain rate while reducing mean token use, at the cost of a modest increase in false abstentions on solvable problems. Full-run cases --- where neither e-process reaches its threshold within the token budget --- account for a minority of trajectories in both settings.

\subsection{Ablation Study}

\begin{figure}[t]
\centering
\includegraphics[width=0.46\textwidth]{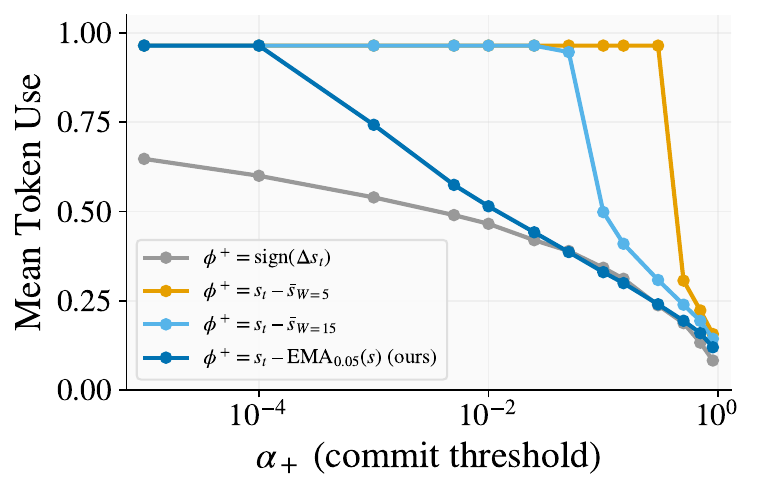}
\caption{Four choices of the upper evidence increment $\phi^+$ are compared by measuring mean token use as a function of $\alpha_+$ (log scale), with the lower stream fixed to $\phi^- = \mathrm{sign}(-\Delta s_t)$ and results aggregated over all 12 model--dataset combinations.}
\label{fig:ablation-phi}
\end{figure}

\paragraph{$\phi^+$ construction.}
We ablate the choice of upper $\phi^+$ by evaluating four constructions with the stability signal held fixed throughout. As shown in Figure~\ref{fig:ablation-phi}, we report how mean token use responds to $\alpha_+$ for each choice. Sign ($\phi^+ = \mathrm{sign}(\Delta s_t)$) responds to every $\alpha_+$ value but saturates in the high-budget regime: the binary increment fires too readily, leaving no operating points there. Window-mean baselines ($\bar{s}_{W=5}$, $\bar{s}_{W=15}$) exhibit near-zero $\alpha_+$ sensitivity across most of the practical range because the short window adapts quickly to the signal level, keeping $s_t - \bar{s}_W$ small and preventing wealth accumulation. EMA$_{0.05}$ decreases monotonically across the full $\alpha_+$ range, providing fine-grained budget control at every point.

\paragraph{$\lambda$ update rule.}
We ablate the choice of betting fraction $\lambda$ by comparing five strategies with $\phi^+$ fixed to EMA$_{0.05}$: three fixed values ($\lambda \in \{0.05, 0.15, 0.50\}$), ONS-m, and aGRAPA. As shown in Figure~\ref{fig:ablation-lambda}, fixed $\lambda$ performance varies by value: $\lambda{=}0.05$ and $\lambda{=}0.15$ under-commit and fall below the adaptive frontier, while $\lambda{=}0.50$ performs comparably to aGRAPA but requires grid-searching over the $\lambda$ space. ONS-m~\citep{pmlr-v75-cutkosky18a, waudbysmith2022estimatingmeansboundedrandom}, the Online Newton Step algorithm optimized for long observation horizons, collapses its $\lambda$ to zero within the first few chunks of each reasoning trace, yielding operating points only near full token use and providing no early-stopping benefit; this shows that algorithms calibrated for long sequences degenerate when the observation horizon is as short as a reasoning trace. aGRAPA sustains non-zero $\lambda$ throughout each trace and achieves a Pareto area matching the best fixed $\lambda$ ($\lambda{=}0.50$) without requiring any hyperparameter search over the $\lambda$ grid, making it our default choice.

  \begin{figure}[t]
\centering
\includegraphics[width=0.48\textwidth]{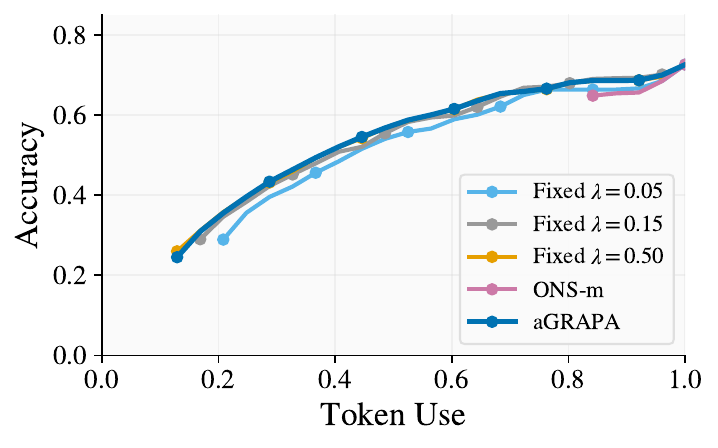}
\caption{Five $\lambda$ update strategies are compared on the token use--accuracy Pareto curve, with $\phi^+$ fixed to EMA$_{0.05}$ and results aggregated over all 12 model--dataset combinations.}
\label{fig:ablation-lambda}
\end{figure}

\section{Conclusion}
We propose CES, a calibration-free framework for adaptive chain-of-thought stopping. CES accumulates sequential evidence through two parallel e-processes on a shared signal decomposed into position and signed velocity; the commit and abstain streams are anti-correlated by construction, so evidence for one decision is structurally evidence against the other. Under the null that stopping is unsafe, Ville's inequality gives an anytime-valid false-alarm bound controlled by a single pair $(\alpha_+, \alpha_-)$. The bound holds at every step and under arbitrary continuation policies.
The construction removes the calibration set from the deployment loop. Calibration-based stoppers tie their guarantees to a specific (model, dataset, budget) configuration and must be re-tuned whenever any factor shifts; with unseen models or distribution drift, calibration data is unavailable or stale. CES carries no such dependence: the same $\alpha$ transfers across models, datasets, and token budgets without held-out data.

\paragraph{Future work.}

Future work can consider composing the signal-level e-process with a downstream conformal layer to obtain a task-level coverage guarantee on top of CES; instance-wise setting of $\alpha$ via test-time learning; extending the framework to alternative forms of test-time scaling, e.g., parallel reasoning chains; and improving the underlying per-chunk signal itself, e.g., by adding an auxiliary objective during post-training so that the increments $\phi^m_t$ correlate more directly with answer correctness.

\section*{Limitations}
The primary limitation of our approach is that the false-alarm guarantee is signal-level: under the null of no systematic drift in $\phi^m_t$, Ville's inequality bounds the probability that $E^m_t$ crosses $1/\alpha_m$, but not the probability that an early commit yields a wrong answer. The correspondence between $\alpha_m$ and the task-level FP rate is monotone (Section~\ref{sec:exp-sc}) but not numerically tight, with the looseness depending on the trajectory distribution and on the correlation between $s_t$ and answer correctness.
Applications targeting a strict task-level risk should therefore layer a downstream calibration step on top of our framework.

\section*{Acknowledgements}
This work was partly supported by the InnoCORE program of the Ministry of Science and ICT (AI Meta-Scientist, N10260110) and Artificial intelligence industrial convergence cluster development project funded by the Ministry of Science and ICT(MSIT, Korea)\&Gwangju Metropolitan City.

\bibliography{references}

\appendix

\setcounter{table}{0}
\renewcommand{\thetable}{\Alph{table}}
\setcounter{figure}{0}
\renewcommand{\thefigure}{\Alph{figure}}
\setcounter{equation}{0}
\renewcommand{\theequation}{\Alph{equation}}
\setcounter{section}{0}
\renewcommand{\thesection}{\Alph{section}}

\newpage
\onecolumn

\section{Signal Definitions}
\label{sec:signal}

We evaluate three per-chunk scalar signals $s_t \in [0,1]$, each smoothed after raw computation with an exponential moving average (EMA) at $\alpha = 0.3$:
\[
\bar{s}_t = 0.3 \cdot s_t + 0.7 \cdot \bar{s}_{t-1}, \quad \bar{s}_0 = s_0.
\]

\paragraph{Confidence.}
The confidence signal is the geometric mean of per-token generation probabilities over chunk $r_t$:
\[
s_t = \exp\!\left(\frac{1}{|r_t|}\sum_{i \in r_t} \log p_\theta(x_i \mid x_{<i})\right) \in (0, 1],
\]
where $|r_t|$ is the number of tokens in the chunk. High values indicate that the model generates each token with high probability; near-zero values indicate uncertain generation. This signal is adopted from~\citet{yang2025deer}.

\paragraph{EAT (Emission Answer Token).}
At each chunk boundary the model is forced to emit an intermediate answer; EAT records the entropy of the model distribution over the \emph{first} emitted answer token~\citep{wang2025eat}. We invert and normalize to $[0,1]$ so that higher values indicate higher confidence:
\[
s_t = 1 - \min\!\left(1,\; \frac{H_t}{\log 10}\right), \quad H_t = -\sum_v p_\theta(v \mid x, r_{1:t})\log p_\theta(v \mid x, r_{1:t}),
\]
where the sum is over the vocabulary and $\log 10$ normalizes against a ten-class uniform baseline.

\paragraph{Stability.}
The stability signal measures the persistence of the intermediate answer across consecutive chunks. Let $f_t(x)$ denote the normalized intermediate answer at chunk $t$. Define the run length as the number of consecutive chunks ending at $t$ whose normalized answers all agree:
\[
\mathrm{run}_t = \max\bigl\{k \ge 1 : f_{t-k+1}(x) = \cdots = f_t(x)\bigr\}.
\]
The raw signal is
\[
s_t = \frac{\min(\mathrm{run}_t,\; 8)}{8} \in [0, 1],
\]
capped at 8 to bound sensitivity to long uninterrupted runs. High stability indicates the model has converged on an answer; low stability indicates the answer is still changing.

\section{Deeper Background}

\label{sec:prelim}

\subsection{Setup}
Let $(\F_t)_{t \ge 0}$ be a filtration with $\F_0$ trivial. A process $(X_t)$ is \emph{predictable} if $X_t$ is $\F_{t-1}$-measurable for all $t \ge 1$. We work under a null hypothesis $H_0$, possibly composite, which constrains the conditional mean of the increment process specified in Eq.~\eqref{eq:null-mean}. Expectation $\E[\cdot]$ and probability $\Pr(\cdot)$ are taken under $H_0$ throughout.

\subsection{E-processes and test supermartingales}
A nonnegative process $(E_t)_{t \ge 0}$ adapted to $(\F_t)$ with $E_0 = 1$ is an \emph{e-process} for $H_0$ if
\begin{equation}
\E[E_\tau] \le 1
\end{equation}
for every stopping time $\tau$. A \emph{test supermartingale} under $H_0$ is a nonnegative process $(M_t)$ with $M_0 = 1$ and $\E[M_t \mid \F_{t-1}] \le M_{t-1}$. Every test supermartingale is an e-process; the converse holds for the constructions used here.

\subsection{Ville's inequality}
For any e-process $(E_t)$ and any $\alpha \in (0, 1)$,
\begin{equation}
\Pr\!\left( \sup_{t \ge 0} E_t \ge 1/\alpha \right) \le \alpha.
\end{equation}
The bound holds uniformly over $t$. Consequently, the stopping rule
\begin{equation}
\tau_\alpha = \inf\{\, t \ge 0 : E_t \ge 1/\alpha \,\}
\end{equation}
controls the type-I error at level $\alpha$ regardless of when or how often $E_t$ is inspected. This \emph{anytime-validity} distinguishes e-processes from $p$-values, which require a pre-specified sample size.

\subsection{Wealth construction by betting}
Let $(\phi_t)$ be an $\F_t$-adapted increment process with $|\phi_t| \le 1$ and
\begin{equation}
\E[\phi_t \mid \F_{t-1}] \le 0, \qquad t \ge 1.
\label{eq:null-mean}
\end{equation}
For any predictable $(\lambda_t)$ with $\lambda_t \ge 0$ and $1 + \lambda_t \phi_t > 0$ a.s., the wealth process
\begin{equation}
E_t = \prod_{s=1}^{t} \bigl(1 + \lambda_s\, \phi_s\bigr), \qquad E_0 = 1,
\end{equation}
is a nonnegative test supermartingale, and hence an e-process, for any $H_0$ satisfying \eqref{eq:null-mean}. The interpretation is a gambler who, at round $t$, stakes a fraction $\lambda_t$ of current wealth on the event $\{\phi_t > 0\}$.

\subsection{aGRAPA Betting Fraction}
The bet $\lambda_t$ governs the power of the test: small $\lambda_t$ accumulates evidence slowly, large $\lambda_t$ risks rapid wealth loss when $\phi_t < 0$. A standard adaptive choice with no tuning parameter is aGRAPA \citep{waudbysmith2022estimatingmeansboundedrandom}
\begin{equation}
\lambda_t = \clip\!\left(\frac{\sum_{s<t} \phi_s}{1 + \sum_{s<t} \phi_s^2},\ 0,\ \lambda_{\max}\right),
\end{equation}
which is $\F_{t-1}$-measurable and approximates the Kelly-optimal fraction online. With $|\phi_t| \le 1$ and $\lambda_{\max} \le 1/2$, the nonnegativity constraint $1 + \lambda_t \phi_t > 0$ holds automatically.

\subsection{Composite nulls}
The construction requires only the one-sided constraint \eqref{eq:null-mean} on the conditional mean of $\phi_t$, not a fully specified distribution. The resulting e-process is therefore valid for any $H_0$ under which \eqref{eq:null-mean} holds, which is essential when the null is non-parametric or distributional assumptions on the underlying signal are weak.

\section{Theoretical Properties}
\label{sec:appen:theory}

This section establishes the anytime-valid guarantees of the
construction. Notation follows Section~\ref{sec:method};
$m \in \{+, -\}$ indexes the two streams.

\begin{lemma}[Bounded increments and predictable bets]
\label{lem:bounded}
For each $t \ge 1$:
\emph{(i)} $\phi^+_t \in [-1, 1]$ and $\phi^-_t \in \{-1, 0, 1\}$;
\emph{(ii)} $\lambda^m_t$ is $\mathcal{F}_{t-1}$-measurable with
$\lambda^m_t \in [0, \lambda_{\max}]$ and $\lambda_{\max} = 1/2$;
\emph{(iii)} $1 + \lambda^m_t \phi^m_t \ge 1/2$ almost surely.
\end{lemma}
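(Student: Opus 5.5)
The plan is to verify the three claims in order, directly from the definitions in Section~\ref{sec:method}. Each part reduces to an elementary property of the clip and sign maps together with the measurability of the aGRAPA rule in Eq.~\eqref{eq:betting}.

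For (i), I would read off the ranges. By Eq.~\eqref{eq:savi_commit_phi}, $\phi^+_t = \clip(s_t - e_t, -1, 1)$, and clipping to $[-1,1]$ lands in $[-1,1]$ for any real argument, so no assumption on $s_t$ or $e_t$ is needed. By Eq.~\eqref{eq:savi_abstain_phi}, $\phi^-_t = \mathrm{sign}(s_{t-1} - s_t)$ takes values in $\{-1,0,1\}$ by definition of the sign function.

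For (ii), the key point is the choice of filtration. I would take $\F_t$ to be the $\sigma$-field generated by the trace prefix $r_{1:t}$, or at least by $(s_1,\dots,s_t)$. Two facts then need checking.
\begin{itemize}
\item $e_t$ is a deterministic function of $s_{1:t}$ via the EMA recursion, given a fixed initialization $e_0$ that is $\F_0$-measurable (taken to be a constant, or $s_0$ included in $\F_0$). Hence $\phi^m_s$ is $\F_s$-measurable for each $s$.
\item $\lambda^m_t$ is a Borel function of $(\phi^m_1,\dots,\phi^m_{t-1})$: it is the ratio $\sum_{s<t}\phi^m_s / (1+\sum_{s<t}(\phi^m_s)^2)$, whose denominator is at least $1$, composed with the continuous map $\clip(\cdot,0,\lambda_{\max})$. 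It is therefore $\F_{t-1}$-measurable.
\end{itemize}
The range $[0,\lambda_{\max}]$ with $\lambda_{\max}=1/2$ is again immediate from the clip. For $t=1$ the sums are empty, so $\lambda^m_1 = 0$, which is constant and hence trivially $\F_0$-measurable. The only subtlety is pinning down the filtration and the initialization of $e_0$ and $s_0$ (for $\phi^-_1$); I would state these conventions explicitly at this step.

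For (iii), I would combine (i) and (ii): since $\lambda^m_t \ge 0$ and $\phi^m_t \ge -1$, we get $\lambda^m_t\phi^m_t \ge -\lambda^m_t \ge -1/2$, hence $1+\lambda^m_t\phi^m_t \ge 1/2$. This holds pointwise, not merely almost surely.

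None of the steps is a real obstacle. The only place requiring care is the measurability bookkeeping in (ii), specifically making the filtration and boundary conventions explicit so that predictability is unambiguous.
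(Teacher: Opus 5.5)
Your proposal is correct and follows the same route as the paper: (i) and (ii) are read off from the clip and sign definitions and the aGRAPA rule in Eq.~\eqref{eq:betting}, and (iii) is the same chain $\lambda^m_t\phi^m_t \ge -\lambda^m_t \ge -1/2$. Your explicit choice of filtration and your conventions for $e_0$ and $s_0$ are bookkeeping the paper leaves implicit, and they make the predictability claim in (ii) fully precise.
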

\begin{proof}
\emph{(i)} and \emph{(ii)} follow from
\eqref{eq:savi_commit_phi}--\eqref{eq:betting}. For \emph{(iii)},
$\lambda^m_t \phi^m_t \ge -\lambda^m_t \ge -1/2$.
\end{proof}

\begin{definition}[Signal-level nulls]
\begin{equation*}
H_0^+ : \mathbb{E}[\phi^+_t \mid \mathcal{F}_{t-1}] \le 0,
\qquad
H_0^- : \mathbb{E}[\phi^-_t \mid \mathcal{F}_{t-1}] \le 0,
\qquad \forall t \ge 1.
\end{equation*}
$H_0^+$ asserts that $s_t$ does not systematically exceed its slow
baseline; $H_0^-$ asserts that $s_t$ does not systematically decrease.
\end{definition}

\begin{lemma}[Supermartingale property]
\label{lem:supermart}
Under $H_0^m$, $(E^m_t)_{t \ge 0}$ is a nonnegative
$(\mathcal{F}_t)$-supermartingale with $E^m_0 = 1$.
\end{lemma}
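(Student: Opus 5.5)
The plan is to verify the three defining properties of a nonnegative supermartingale (adaptedness with integrability, nonnegativity, and the conditional-expectation inequality) directly from the product form \eqref{eq:savi_wealth_process}, using Lemma~\ref{lem:bounded} throughout.

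First, $E^m_t$ is a finite product of factors $1+\lambda^m_i\phi^m_i$ with $i \le t$. Each $\phi^m_i$ is $\mathcal{F}_i$-measurable, because it is computed from $s_{1:i}$ and the EMA $e_i$, which is itself a function of $s_{1:i}$. Each $\lambda^m_i$ is $\mathcal{F}_{i-1}$-measurable by Lemma~\ref{lem:bounded}(ii). Hence $E^m_t$ is $\mathcal{F}_t$-measurable, so the process is adapted.

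Second, Lemma~\ref{lem:bounded}(i)--(ii) give $|\lambda^m_i\phi^m_i| \le 1/2$, so every factor lies in $[1/2, 3/2]$. It follows that
\[
(1/2)^t \le E^m_t \le (3/2)^t .
\]
This yields nonnegativity (indeed strict positivity, using Lemma~\ref{lem:bounded}(iii)) and integrability at every fixed $t$. The initial value $E^m_0 = 1$ holds because it is the empty product.

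Third, for the supermartingale inequality, write $E^m_t = E^m_{t-1}(1+\lambda^m_t\phi^m_t)$. Both $E^m_{t-1}$ and $\lambda^m_t$ are $\mathcal{F}_{t-1}$-measurable and bounded, so we may pull them out of the conditional expectation:
\[
\mathbb{E}[E^m_t \mid \mathcal{F}_{t-1}] = E^m_{t-1}\bigl(1+\lambda^m_t\,\mathbb{E}[\phi^m_t\mid\mathcal{F}_{t-1}]\bigr).
\]
Under $H^m_0$ the conditional mean is $\le 0$, and we also have $\lambda^m_t \ge 0$ and $E^m_{t-1} \ge 0$. Therefore the right-hand side is at most $E^m_{t-1}$.

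The only step needing care is the "taking out what is known" step. It requires integrability of the product, which the deterministic bound $(3/2)^t$ supplies, so no further assumptions on $s_t$ are needed. Everything else is routine.
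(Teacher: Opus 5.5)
Your proposal is correct and follows essentially the same route as the paper: nonnegativity from Lemma~\ref{lem:bounded}(iii) by induction, then pulling the $\mathcal{F}_{t-1}$-measurable factors $E^m_{t-1}$ and $\lambda^m_t \ge 0$ out of the conditional expectation and applying $H_0^m$. Your added checks of adaptedness and integrability (via the deterministic bound $E^m_t \le (3/2)^t$) fill in details the paper leaves implicit, under the natural assumption that $\mathcal{F}_t$ contains $s_{1:t}$.
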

\begin{proof}
Nonnegativity follows from Lemma~\ref{lem:bounded}\,(iii) by induction.
Since $\lambda^m_t \ge 0$ is $\mathcal{F}_{t-1}$-measurable,
\begin{equation*}
\mathbb{E}[E^m_t \mid \mathcal{F}_{t-1}]
= E^m_{t-1}\bigl(1 + \lambda^m_t\,
   \mathbb{E}[\phi^m_t \mid \mathcal{F}_{t-1}]\bigr)
\le E^m_{t-1}
\end{equation*}
under $H_0^m$.
\end{proof}

\begin{theorem}[Marginal type-I under the strict null]
\label{thm:ville}
For any $\alpha_m \in (0, 1)$,
\begin{equation}
\Pr_{H_0^m}\!\bigl(\tau_m \le T\bigr) \le \alpha_m.
\label{eq:ville-bound}
\end{equation}
\end{theorem}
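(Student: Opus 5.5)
The plan is to reduce the claim to Ville's inequality applied to the test supermartingale supplied by Lemma~\ref{lem:supermart}. By definition, $\tau_m = \inf\{t : E^m_t \ge 1/\alpha_m\}$. The event $\{\tau_m \le T\}$ is therefore contained in $\{\max_{0\le t\le T} E^m_t \ge 1/\alpha_m\}$, which in turn is contained in $\{\sup_{t\ge 0} E^m_t \ge 1/\alpha_m\}$. It thus suffices to bound the probability of the last event under $H_0^m$.

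First, I will invoke Lemma~\ref{lem:supermart}. Under $H_0^m$, $(E^m_t)$ is a nonnegative supermartingale with $E^m_0=1$; it is well defined and nonnegative because of the bounded increments and predictable bets in Lemma~\ref{lem:bounded}. Since the horizon is finite, I will define the process as constant after $T$, setting $E^m_t = E^m_T$ for $t>T$ (equivalently $\phi^m_t=0$). This preserves the supermartingale property. So only the finite-horizon maximal inequality is needed, and no convergence argument at infinity is required.

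Second, I will prove the finite-horizon Ville (Doob maximal) bound directly rather than cite it, to keep the argument self-contained. Let $\sigma = \tau_m \wedge T$, a bounded stopping time. Optional stopping for bounded stopping times applies to nonnegative supermartingales and gives $\mathbb{E}[E^m_\sigma] \le E^m_0 = 1$; this step can also be checked by iterating the one-step inequality of Lemma~\ref{lem:supermart} on the stopped process. On the event $\{\tau_m \le T\}$ we have $E^m_\sigma = E^m_{\tau_m} \ge 1/\alpha_m$. Nonnegativity of $E^m_\sigma$ elsewhere then yields
\[
\tfrac{1}{\alpha_m}\Pr_{H_0^m}(\tau_m \le T) \le \mathbb{E}\bigl[E^m_\sigma \mathbf{1}\{\tau_m\le T\}\bigr] \le \mathbb{E}[E^m_\sigma] \le 1,
\]
which is \eqref{eq:ville-bound}.

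The main obstacle is not the inequality itself but making sure the hypotheses are met. The filtration must be one to which both $\phi^m_t$ and $\lambda^m_t$ are adapted and predictable, respectively. For $\phi^+_t$ this includes the EMA $e_t$, which is a causal function of $s_{1:t}$. In addition, $H_0^m$ must hold for every $t\le T$, including the first step, where $\lambda^m_1=0$ makes that step trivial. If $T$ is itself random, for example trace-length dependent, I would treat it as a stopping time and use the same bounded-stopping argument with $\sigma=\tau_m\wedge T\wedge N$, then let $N\to\infty$ via Fatou's lemma. As an alternative, I would simply appeal to the infinite-horizon Ville inequality already quoted in Appendix~\ref{sec:prelim}.
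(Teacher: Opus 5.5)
Your proposal is correct and follows the paper's proof: both apply Ville's inequality to the supermartingale from Lemma~\ref{lem:supermart}, using the identity $\{\tau_m \le T\} = \{\max_{t \le T} E^m_t \ge 1/\alpha_m\}$. Your optional-stopping argument at $\sigma = \tau_m \wedge T$ just writes out the standard proof of the finite-horizon Ville bound, which the paper cites rather than derives.
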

\begin{proof}
By Lemma~\ref{lem:supermart} and Ville's inequality,
$\Pr_{H_0^m}(\sup_{t \le T} E^m_t \ge 1/\alpha_m) \le \alpha_m$,
which equals $\Pr_{H_0^m}(\tau_m \le T)$ by definition of $\tau_m$.
\end{proof}

The strict null requires zero mean drift of $\phi^m_t$. In practice
$e_t$ lags $s_t$, so a slowly drifting signal can violate the
assumption. We therefore state a relaxed bound that allows $E^m_t$ to
be a submartingale.

\begin{theorem}[Marginal type-I under positive drift]
\label{thm:doob}
Suppose $(E^m_t)_{t=0}^T$ is a nonnegative submartingale with
$\mathbb{E}[E^m_T] \le c_m$ for some $c_m \ge 1$. Then for any
$\alpha_m \in (0, 1)$,
\begin{equation}
\Pr\!\bigl(\tau_m \le T\bigr) \le \alpha_m \, c_m.
\label{eq:doob-bound}
\end{equation}
\end{theorem}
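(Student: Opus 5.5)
The approach is Doob's maximal inequality for nonnegative submartingales on the finite horizon $\{0,\dots,T\}$. This is the natural replacement for Ville's inequality once the supermartingale property of Lemma~\ref{lem:supermart} is lost. The first step is to rewrite the event using the definition of $\tau_m$:
\begin{equation*}
\{\tau_m \le T\} = \Bigl\{\max_{0\le t\le T} E^m_t \ge 1/\alpha_m\Bigr\}.
\end{equation*}
This is exact because time is discrete and the horizon is finite, so the infimum defining $\tau_m$ is attained whenever it is at most $T$.

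The second step is to prove Doob's inequality directly, so the argument stays self-contained. Put $\lambda = 1/\alpha_m$ and let $\sigma = \tau_m \wedge T$, a bounded stopping time. On the event $A = \{\tau_m \le T\}$ we have $E^m_\sigma \ge \lambda$, hence
\begin{equation*}
\lambda \Pr(A) \le \mathbb{E}\bigl[E^m_\sigma \mathbf{1}_A\bigr].
\end{equation*}
To compare with time $T$, decompose $A$ into the disjoint events $A_k = \{\tau_m = k\} \in \mathcal{F}_k$ for $k \le T$. For each $k$ the submartingale property, iterated via the tower rule, gives
\begin{equation*}
\mathbb{E}\bigl[E^m_k \mathbf{1}_{A_k}\bigr] \le \mathbb{E}\bigl[E^m_T \mathbf{1}_{A_k}\bigr].
\end{equation*}
Summing over $k$ yields
\begin{equation*}
\lambda \Pr(A) \le \mathbb{E}\bigl[E^m_T \mathbf{1}_A\bigr] \le \mathbb{E}[E^m_T] \le c_m,
\end{equation*}
where the middle inequality uses nonnegativity from Lemma~\ref{lem:bounded}(iii). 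Multiplying through by $\alpha_m$ gives $\Pr(\tau_m \le T) \le \alpha_m c_m$.

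The main point needing care is the per-$k$ comparison step. It needs the submartingale property with respect to the same filtration $(\mathcal{F}_t)$ for which $\tau_m$ is a stopping time. It also needs integrability of each $E^m_t$, which is immediate because, by Lemma~\ref{lem:bounded}, the increments satisfy $1+\lambda^m_t\phi^m_t \le 1+\lambda_{\max}$, so $E^m_t \le (3/2)^t$ is bounded. Finally, note that if $c_m \ge 1/\alpha_m$ the bound is vacuous but still true. Note also that the case $c_m = 1$ with $E^m$ a martingale recovers Theorem~\ref{thm:ville} on the horizon $T$.
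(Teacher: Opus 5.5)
Your proposal is correct and follows the same route as the paper: identify $\{\tau_m \le T\}$ with $\{\max_{t\le T} E^m_t \ge 1/\alpha_m\}$, then apply Doob's maximal inequality for nonnegative submartingales at level $a = 1/\alpha_m$ together with $\mathbb{E}[E^m_T]\le c_m$. The only difference is that you prove Doob's inequality inline via the first-passage decomposition $A_k=\{\tau_m=k\}$, whereas the paper simply cites it.
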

\begin{proof}
Doob's maximal inequality gives, for any $a > 0$,
$\Pr(\sup_{t \le T} E^m_t \ge a) \le \mathbb{E}[E^m_T]/a \le c_m/a$.
Set $a = 1/\alpha_m$.
\end{proof}

The constants $c_+, c_-$ are reported empirically in
Table~\ref{tab:per_cell_errors}; $\alpha_m$ remains monotone in the
false-decision probability.

\begin{proposition}[Structural anti-correlation]
\label{prop:anticorr}
For every $t \ge 1$,
\begin{equation}
\mathrm{Cov}\!\bigl(\lambda^+_t \phi^+_t,\ \lambda^-_t \phi^-_t
 \,\big|\, \mathcal{F}_{t-1}\bigr) \le 0.
\end{equation}
\end{proposition}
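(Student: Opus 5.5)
Since $\lambda^+_t$ and $\lambda^-_t$ are $\mathcal{F}_{t-1}$-measurable and nonnegative (Lemma~\ref{lem:bounded}\,(ii)), the first step is to pull them out of the conditional covariance:
\begin{equation*}
\mathrm{Cov}\bigl(\lambda^+_t\phi^+_t,\ \lambda^-_t\phi^-_t \,\big|\, \mathcal{F}_{t-1}\bigr)
= \lambda^+_t\lambda^-_t\,\mathrm{Cov}\bigl(\phi^+_t,\ \phi^-_t \,\big|\, \mathcal{F}_{t-1}\bigr).
\end{equation*}
All quantities are bounded by Lemma~\ref{lem:bounded}\,(i), so every moment involved exists. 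Because $\lambda^+_t\lambda^-_t \ge 0$, it then suffices to show that the conditional covariance of the raw increments is nonpositive.

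Next I will write both increments as functions of the single new observation $s_t$, with coefficients fixed by $\mathcal{F}_{t-1}$. I take the chunk-level filtration so that $s_{t-1}$ and $e_{t-1}$ are $\mathcal{F}_{t-1}$-measurable and $s_t$ is $\mathcal{F}_t$-measurable. Substituting the EMA recursion gives
\begin{equation*}
\phi^+_t = \mathrm{clip}\bigl((1-\gamma)(s_t - e_{t-1}),-1,1\bigr) =: f(s_t),
\qquad
\phi^-_t = \mathrm{sign}(s_{t-1}-s_t) =: g(s_t).
\end{equation*}
Here $f$ and $g$ depend on $\omega$ only through $\mathcal{F}_{t-1}$-measurable parameters. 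Since $1-\gamma>0$ and clipping is monotone, $f$ is nondecreasing in $s_t$. The function $g$ is nonincreasing in $s_t$.

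The core step is Chebyshev's association inequality, applied conditionally. Let $\mu$ be a regular conditional distribution of $s_t$ given $\mathcal{F}_{t-1}$; it exists because $s_t$ is real-valued. Take $S, S'$ i.i.d.\ from $\mu$, with $f, g$ frozen at their $\mathcal{F}_{t-1}$-determined parameters. Since $f$ and $g$ have opposite monotonicity,
\begin{equation*}
\bigl(f(S)-f(S')\bigr)\bigl(g(S)-g(S')\bigr) \le 0
\end{equation*}
pointwise. Taking expectations, the left-hand side equals $2\,\mathrm{Cov}_\mu(f(S),g(S))$, which equals $2\,\mathrm{Cov}(\phi^+_t,\phi^-_t\mid\mathcal{F}_{t-1})$ almost surely. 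This yields the claim.

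I expect the main obstacle to be the measure-theoretic bookkeeping rather than the inequality itself. One needs the disintegration argument to justify treating the $\mathcal{F}_{t-1}$-measurable parameters $e_{t-1}, s_{t-1}$ as constants inside the conditional expectation. One also needs the filtration to be rich enough that these parameters are indeed $\mathcal{F}_{t-1}$-measurable. If $\mathcal{F}_t$ were generated only by the increments, I would first enlarge it to the natural filtration of $(s_i)_{i\le t}$ and note that the bets remain predictable.
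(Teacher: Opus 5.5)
Your argument is correct and is essentially the paper's proof: both treat $s_{t-1}$, $e_{t-1}$, $\lambda^\pm_t$ as constants given $\mathcal{F}_{t-1}$, use $s_t-e_t=(1-\gamma)(s_t-e_{t-1})$ to make $\phi^+_t$ nondecreasing and $\phi^-_t$ nonincreasing in $s_t$, and conclude with the univariate Harris (Chebyshev association) inequality. The only cosmetic differences are that you prove that inequality via the i.i.d.-copy identity instead of citing it, and you factor out $\lambda^+_t\lambda^-_t\ge 0$ rather than noting that nonnegative scaling preserves monotonicity. One small caution on your closing remark: passing to a finer filtration is a redefinition of the setup (the one the paper implicitly uses), not a reduction, because a nonpositive conditional covariance given a finer $\sigma$-field does not by itself imply the same given a coarser one.
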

\begin{proof}
Conditional on $\mathcal{F}_{t-1}$, the quantities $s_{t-1}, e_{t-1},
\lambda^+_t, \lambda^-_t$ are constants with $\lambda^m_t \ge 0$.
Since $e_t = \gamma s_t + (1-\gamma) e_{t-1}$, we have
$s_t - e_t = (1-\gamma)(s_t - e_{t-1})$, which is non-decreasing in
$s_t$; hence $\phi^+_t = \mathrm{clip}(s_t - e_t, -1, 1)$ is
non-decreasing in $s_t$. By \eqref{eq:savi_abstain_phi},
$\phi^-_t = \mathrm{sign}(s_{t-1} - s_t)$ is non-increasing in $s_t$.
Multiplying by the nonnegative constants $\lambda^+_t, \lambda^-_t$
preserves monotonicity. The conditional covariance of two oppositely
monotone functions of a single random variable is nonpositive
(Harris's inequality, univariate case).
\end{proof}

\begin{theorem}[Joint type-I via union]
\label{thm:joint}
Under $H_0 := H_0^+ \cap H_0^-$,
\begin{equation}
\Pr_{H_0}\!\bigl(\{\tau_+ \le T\} \cup \{\tau_- \le T\}\bigr)
\le \alpha_+ + \alpha_-.
\label{eq:joint-bound}
\end{equation}
Under the relaxed null of Theorem~\ref{thm:doob}, the bound becomes
$\alpha_+ c_+ + \alpha_- c_-$.
\end{theorem}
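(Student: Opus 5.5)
The plan is a direct union bound over the two per-stream crossing events, invoking Theorem~\ref{thm:ville} for each stream. Under the joint null $H_0 = H_0^+ \cap H_0^-$, any law in $H_0$ satisfies both $H_0^+$ and $H_0^-$. So Lemma~\ref{lem:supermart} applies to each stream separately. Both $(\Eu_t)$ and $(\El_t)$ are nonnegative supermartingales with respect to the common filtration $(\F_t)$, each starting at $1$. Note that the single-stream results only require the law to lie in the corresponding marginal null, so membership in the intersection is enough.

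The first step is to fix an arbitrary law $P \in H_0$ and write
\begin{equation*}
P\bigl(\{\tau_+ \le T\} \cup \{\tau_- \le T\}\bigr) \le P(\tau_+ \le T) + P(\tau_- \le T).
\end{equation*}
This uses only subadditivity, so it needs no assumption on the dependence between the streams. In particular, Proposition~\ref{prop:anticorr} is not required; it would only suggest that the bound is conservative. The second step is to bound each term: since $P \in H_0^m$ for $m\in\{+,-\}$, Theorem~\ref{thm:ville} gives $P(\tau_m \le T) \le \alpha_m$. Summing and taking the supremum over $P \in H_0$ yields \eqref{eq:joint-bound}.

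For the relaxed statement, I would replace the second step with Theorem~\ref{thm:doob}. Under its hypotheses, each $(E^m_t)_{t\le T}$ is a nonnegative submartingale with $\E[E^m_T] \le c_m$, so $P(\tau_m \le T) \le \alpha_m c_m$. The same union bound then gives $\alpha_+ c_+ + \alpha_- c_-$.

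The only point needing care, and the closest thing to an obstacle, is checking that both streams are measured against the same probability measure and filtration. The events $\{\tau_m \le T\}$ must live on one space for the union bound to apply. This holds because both $\phi^\pm_t$ are functions of the shared signal $s_{1:t}$, which is $\F_t$-adapted, and both $\lambda^\pm_t$ are $\F_{t-1}$-measurable by Lemma~\ref{lem:bounded}. One also needs $\{\tau_m \le T\} = \{\sup_{t\le T} E^m_t \ge 1/\alpha_m\}$, which follows from the definition of $\tau_m$ exactly as in the proof of Theorem~\ref{thm:ville}.
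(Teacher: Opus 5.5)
Your proposal is correct and follows the paper's proof: the paper identifies the false early-decision event as $\{\tau_+ \le T\} \cup \{\tau_- \le T\}$ and applies the union bound to the per-stream bounds of Theorem~\ref{thm:ville}, or of Theorem~\ref{thm:doob} under the relaxed null. Your additional checks are sound details the paper leaves implicit: that membership in $H_0^+ \cap H_0^-$ suffices for each marginal bound, that both streams live on one probability space and filtration, and that Proposition~\ref{prop:anticorr} is not needed.
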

\begin{proof}
The false early-decision event is
$\{\tau_+ \le T\} \cup \{\tau_- \le T\}$. Apply the union bound to
\eqref{eq:ville-bound} or \eqref{eq:doob-bound}.
\end{proof}

By Proposition~\ref{prop:anticorr}, the per-step increments composing
$E^+_t$ and $E^-_t$ are conditionally anti-correlated, so simultaneous
crossings are suppressed and \eqref{eq:joint-bound} is typically loose
in practice.

\paragraph{Scope of the guarantee.}
Theorems~\ref{thm:ville}--\ref{thm:joint} bound the wealth-crossing
probability under signal-level nulls, i.e., the absence of systematic
drift in $\phi^m_t$. This is not the same as the \emph{task-level}
event ``committing now would yield a wrong answer.'' The task-level
false-positive rate additionally depends on (i) the looseness of the
bound for the trajectory distribution at hand and (ii) the correlation
between $s_t$ and answer correctness. Section~\ref{sec:exp-sc} reports
task-level rates separately: $\alpha_m$ is monotone in the task-level
FP rate but not numerically equal to it, as is standard for any
anytime-valid sequential test.

\section{Additional Experiment Results}
\label{appx:exp}
\subsection{Full Experiment Results}
Figure~\ref{fig:fig5-per-cell} shows the token-use--accuracy Pareto envelopes broken down by all 12 model--dataset combinations (3 models $\times$ 4 datasets), for both the upper-only and dual stopping settings. The aggregate advantage reported in Figure~\ref{fig:fig5} holds consistently at the per-cell level: the proposed method with the stability signal dominates CT-Conf and CT-EAT across the majority of the token budget range in every combination, despite using zero calibration trajectories where CT uses 50 per split.

\begin{figure*}[!h]
\centering
\includegraphics[width=0.48\textwidth]{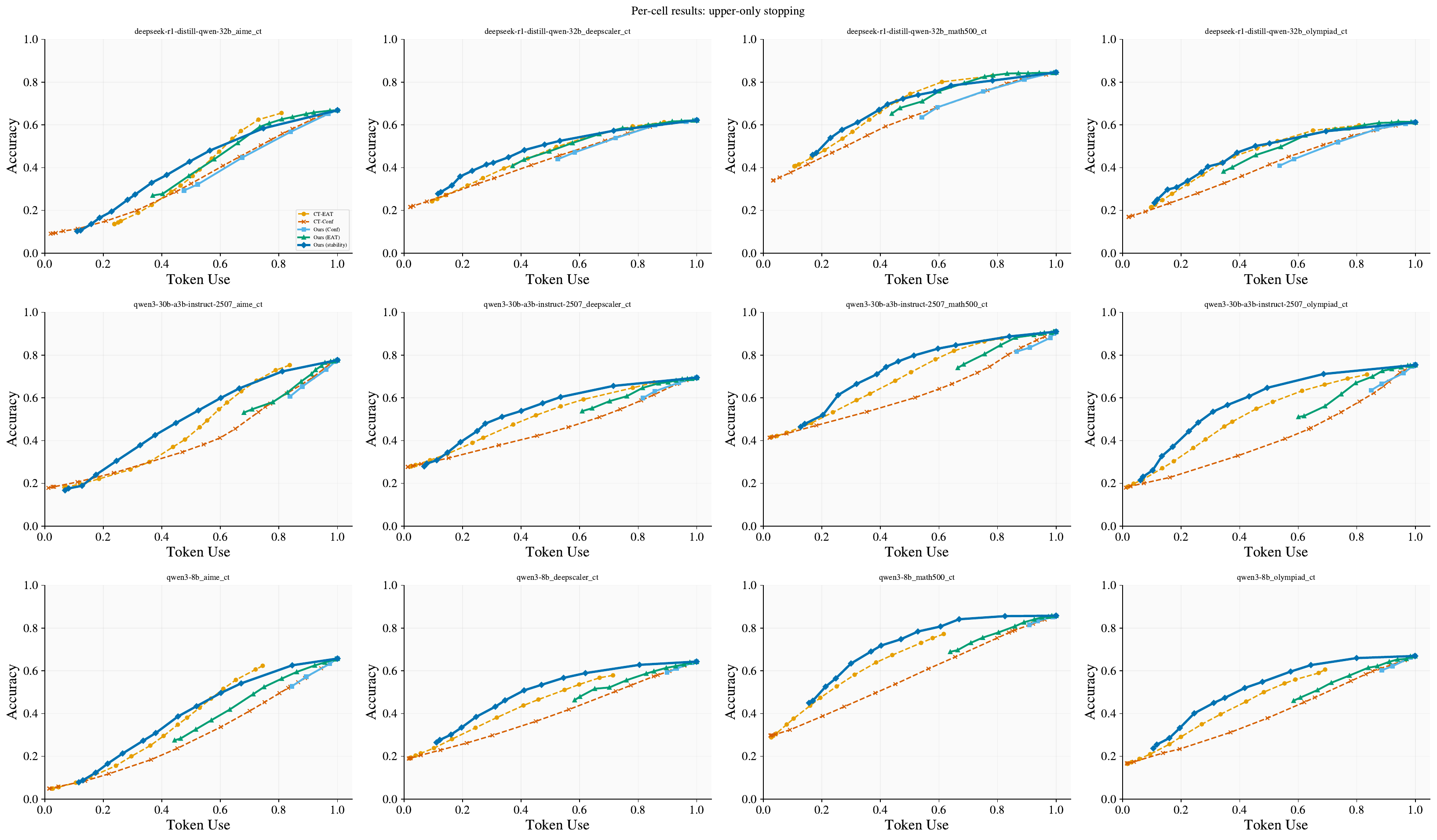}
\hfill
\includegraphics[width=0.48\textwidth]{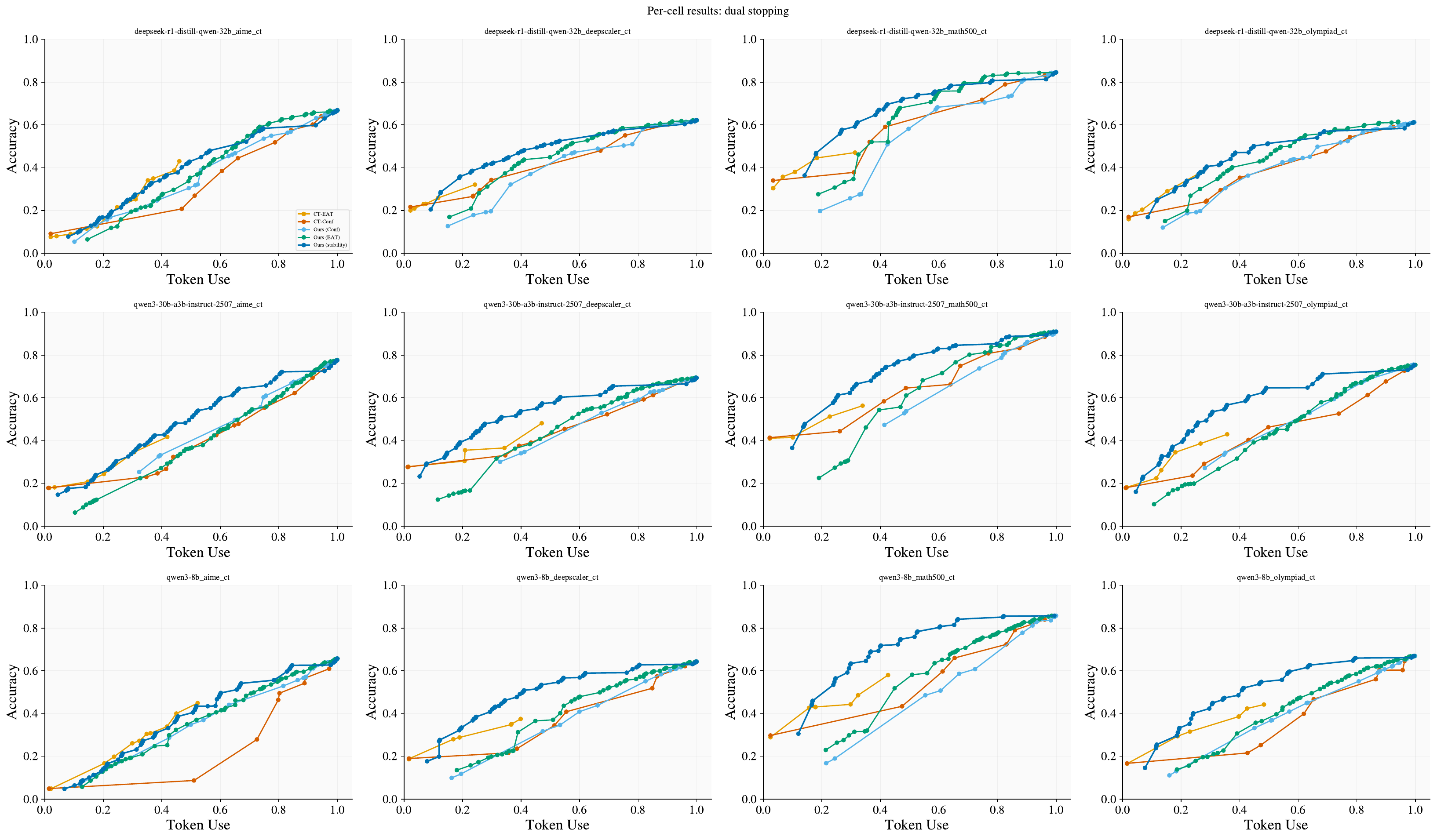}
\caption{Per-combination Pareto envelopes (upper-only, left; dual, right). Best viewed zoomed in.}
\label{fig:fig5-per-cell}
\end{figure*}

Figure~\ref{fig:cross-cell-pareto} shows the aggregated token-use--accuracy Pareto curve under cross-cell deployment in the dual stopping setting, averaged over all $12 \times 11 = 132$ source$\to$target pairs. CT-EAT transferred from a source cell to a different target cell degrades substantially relative to the same-cell reference: the calibration bound breaks when the deployment distribution differs from the calibration distribution. The proposed method requires no calibration data at any stage and uniformly dominates the cross-cell CT baseline across the full operating range.

\begin{figure}[h]
\centering
\includegraphics[width=0.55\textwidth]{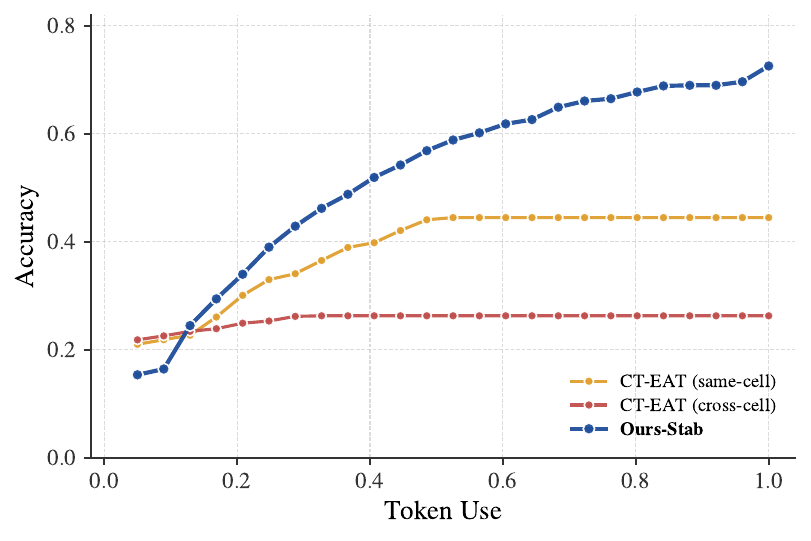}
\caption{Token-use vs.\ accuracy Pareto curves under cross-cell deployment (dual stopping), averaged over all $12 \times 11 = 132$ source$\to$target pairs. CT-EAT same-cell uses each target cell's own calibration data as a reference; CT-EAT cross-cell transfers the source-calibrated threshold to a different target cell. The proposed method (Ours-Stab) requires no calibration data and operates identically across all target cells.}
\label{fig:cross-cell-pareto}
\end{figure}

\subsection{Science Reasoning: GPQA-Diamond}

We evaluate GPQA-Diamond \cite{rein2024gpqa}, a graduate-level multiple-choice science benchmark
containing 198 problems, on all three models from the main paper and two
additional models:
DeepSeek-R1-Distill-Llama-8B and
DeepSeek-R1-Distill-Llama-70B \cite{guo2025deepseek}.

Table~\ref{tab:gpqa} reports the dual-stopping accuracy envelope under
normalized token-use constraints, aggregated over all five models.
A dash indicates that no operating point exists at or below the corresponding
budget.

\begin{table}[!h]
\centering
\small
\begin{tabular}{lcccc}
\toprule
Method
& $\leq 0.2$
& $\leq 0.4$
& $\leq 0.6$
& $\leq 0.8$ \\
\midrule
Truncation
& 0.385 & 0.447 & 0.503 & 0.568 \\
CT-Conf
& 0.399 & 0.439 & 0.490 & 0.554 \\
CT-EAT
& 0.407 & 0.456 & 0.489 & 0.506 \\
CT-Stab
& 0.423 & 0.465 & 0.473 & 0.473 \\
CES-Conf
& 0.112 & 0.209 & 0.375 & 0.511 \\
CES-EAT
& -- & 0.367 & 0.538 & \textbf{0.616} \\
CES-Stab
& \textbf{0.424}
& \textbf{0.528}
& \textbf{0.568}
& 0.594 \\
\bottomrule
\end{tabular}
\caption{
Dual-stopping accuracy on GPQA-Diamond, aggregated over five models.
}
\label{tab:gpqa}
\end{table}

CES achieves the best accuracy at every reported budget.
CES-Stab is strongest through token use $\leq 0.6$, whereas CES-EAT becomes
the best policy at $\leq 0.8$.

\subsection{Code Reasoning}

We further evaluate CES on CRUXEval-O \cite{gu2024cruxeval}, a code-output prediction benchmark
containing 800 problems.
Each problem asks the model to reason about the value returned by a Python
function.
Because the target is a short Python literal, the intermediate-answer
extraction and stability machinery used in the main paper applies without
modification.
We evaluate Qwen3-8B and DeepSeek-R1-Distill-Llama-8B and report the
aggregate over the two model--dataset cells.

\begin{table}[!h]
\centering
\small
\begin{tabular}{lcccc}
\toprule
Method
& $\leq 0.2$
& $\leq 0.4$
& $\leq 0.6$
& $\leq 0.8$ \\
\midrule
Truncation
& 0.292 & 0.368 & 0.450 & 0.486 \\
CT-Conf
& 0.286 & 0.348 & 0.425 & 0.469 \\
CT-EAT
& 0.284 & 0.314 & 0.317 & 0.317 \\
CT-Stab
& \textbf{0.339} & 0.339 & 0.339 & 0.339 \\
CES-Conf
& -- & 0.084 & 0.283 & 0.381 \\
CES-EAT
& -- & 0.251 & 0.424 & 0.488 \\
CES-Stab
& 0.289
& \textbf{0.447}
& \textbf{0.484}
& \textbf{0.492} \\
\bottomrule
\end{tabular}
\caption{
Dual-stopping accuracy on CRUXEval-O, aggregated over two models.
}
\end{table}

CT-Stab retains an advantage at the extreme $\leq 0.2$ budget,
whereas CES-Stab is the best policy from the practical token-use regime above
approximately 0.3.
Together with the other benchmarks, our evaluation therefore covers
mathematical reasoning, science multiple choice, and code-output reasoning.

\subsection{Other Backbones Results}

We next evaluate two DeepSeek-R1-Distill \cite{guo2025deepseek} models using Llama-3.1 backbones.
Each model is evaluated on the four mathematical benchmarks from the main
paper together with GPQA-Diamond.

\paragraph{DeepSeek-R1-Distill-Llama-8B.}

\begin{table}[!h]
\centering
\small
\begin{tabular}{lcccc}
\toprule
Method
& $\leq 0.2$
& $\leq 0.4$
& $\leq 0.6$
& $\leq 0.8$ \\
\midrule
Truncation
& 0.141 & 0.261 & 0.390 & 0.485 \\
CT-Conf
& 0.203 & 0.298 & 0.400 & 0.480 \\
CT-EAT
& 0.266 & 0.334 & 0.347 & 0.347 \\
CT-Stab
& 0.261 & 0.340 & 0.364 & 0.364 \\
CES-Conf
& 0.142 & 0.262 & 0.390 & 0.466 \\
CES-EAT
& 0.173 & 0.346 & 0.474 & \textbf{0.538} \\
CES-Stab
& \textbf{0.276}
& \textbf{0.416}
& \textbf{0.485}
& 0.530 \\
\bottomrule
\end{tabular}
\caption{
Dual-stopping accuracy for DeepSeek-R1-Distill-Llama-8B,
aggregated over five datasets.
Full-run accuracy across cells ranges from 0.475 to 0.744.
}
\end{table}

CES-Stab is strongest through token use $\leq 0.6$,
while CES-EAT is best at $\leq 0.8$.

\paragraph{DeepSeek-R1-Distill-Llama-70B.}

\begin{table}[!h]
\centering
\small
\begin{tabular}{lcccc}
\toprule
Method
& $\leq 0.2$
& $\leq 0.4$
& $\leq 0.6$
& $\leq 0.8$ \\
\midrule
Truncation
& 0.236 & 0.350 & 0.503 & 0.620 \\
CT-Conf
& 0.297 & 0.403 & 0.510 & 0.597 \\
CT-EAT
& 0.355 & 0.438 & 0.447 & 0.452 \\
CT-Stab
& 0.363 & 0.437 & 0.442 & 0.442 \\
CES-Conf
& -- & 0.274 & 0.478 & 0.598 \\
CES-EAT
& 0.206 & 0.450 & 0.571 & 0.651 \\
CES-Stab
& \textbf{0.370}
& \textbf{0.526}
& \textbf{0.609}
& \textbf{0.654} \\
\bottomrule
\end{tabular}
\caption{
Dual-stopping accuracy for DeepSeek-R1-Distill-Llama-70B,
aggregated over five datasets.
Full-run accuracy across cells ranges from 0.604 to 0.828;
AIME full-run accuracy is 0.700.
}
\end{table}

CES-Stab achieves the highest accuracy at every reported budget on the 70B
model.
Together with the main-paper models, these results extend the evaluation to
five models spanning 8B--70B parameters and two backbone families.

\section{Computational cost analyses}
\subsection{Position-Aware FLOPs Accounting}
\label{sec:supp_flops}

Token count can overstate the marginal cost of intermediate probes because the
prefix KV cache is reused.
We therefore additionally report an analytic position-aware FLOPs estimate.
For a token processed at context length $L$, we use
\begin{equation}
f(L)
=
2N_{\mathrm{active}}
+
4n_{\mathrm{layers}}
n_{\mathrm{heads}}
d_{\mathrm{head}}L,
\end{equation}
where the first term accounts for weight-matrix multiplications and the second
for KV-cache attention.
For mixture-of-experts models, $N_{\mathrm{active}}$ denotes the number of
active parameters.
Since the reasoning prefix is already cached, the marginal probe FLOPs include
only the forcing suffix and the at-most-24 greedily decoded answer tokens.

Across all cells, the normalized FLOPs overhead of probing is 0.43--0.83
relative to the original trace cost, compared with a normalized token overhead
of 0.45--0.85.
Token accounting is therefore slightly conservative relative to the analytic
FLOPs accounting.

\subsection{FLOPs-Normalized Pareto Results}

\begin{table}[!h]
\centering
\small
\begin{tabular}{lccc}
\toprule
Method
& $\leq 0.3$
& $\leq 0.5$
& $\leq 0.7$ \\
\midrule
Truncation
& 0.277 & 0.435 & 0.548 \\
CT-Conf (probe-free)
& 0.311 & 0.434 & 0.547 \\
CT-EAT
& 0.323 & 0.390 & 0.430 \\
CT-Stab
& 0.343 & 0.390 & 0.408 \\
CES-Stab
& \textbf{0.353}
& \textbf{0.478}
& \textbf{0.566} \\
\bottomrule
\end{tabular}
\caption{
Dual-stopping accuracy at normalized FLOPs budgets,
aggregated over the 12 main-paper model--dataset cells.
}
\label{tab:flops}
\end{table}

CES-Stab remains the strongest policy at every reported FLOPs budget in
Table~\ref{tab:flops}.
The corresponding probe-inclusive token-axis results differ by approximately
0.01--0.02 accuracy over this range.
Beyond approximately 0.8 of the full-trace budget, the setting is no longer a
meaningful early-stopping regime: probe-free methods eventually overtake
probe-based methods because probing overhead dominates when little generation
is removed.

\section{Accuracy-Relevant Early-Commit Errors}

For evaluating accuracy loss due to early stopping, we define an
\emph{accuracy-relevant commit false positive} as a trajectory satisfying all
three conditions:
\begin{enumerate}
    \item the commit stream crosses before the terminal boundary;
    \item the intermediate prediction at the crossing is incorrect; and
    \item the full-budget terminal prediction would have been correct.
\end{enumerate}
Never-correct trajectories therefore do not contribute to this error count.
Using all trajectories in each cell as the denominator, the accuracy-relevant
FP rate is
\begin{align}
0.26 \pm 0.08
&\quad \text{at } \alpha_+=0.1, \\
0.13 \pm 0.05
&\quad \text{at } \alpha_+=0.01.
\end{align}
When the denominator is restricted to final-correct trajectories, the
corresponding rates are
\begin{align}
0.363 \pm 0.127
&\quad \text{at } \alpha_+=0.1, \\
0.192 \pm 0.078
&\quad \text{at } \alpha_+=0.01.
\end{align}
The two normalizations are consistent because final-correct trajectories
constitute 72.6\% of all trajectories on average.
Table~\ref{tab:per_cell_errors} reports the complete per-cell mapping under
both denominator conventions together with the empirical $c_+$ and $c_-$
constants.

\begin{table*}[!h]
\centering
\scriptsize
\setlength{\tabcolsep}{3.5pt}
\begin{tabular}{llccccccc}
\toprule
Model
& Dataset
& Final Acc.
& \multicolumn{2}{c}{$\alpha_+=0.1$}
& \multicolumn{2}{c}{$\alpha_+=0.01$}
& $c_+$
& $c_-$ \\
\cmidrule(lr){4-5}
\cmidrule(lr){6-7}
&
&
&
FP / All
& FP / Final-Correct
& FP / All
& FP / Final-Correct
&
& \\
\midrule
Qwen3-8B
& AIME
& 0.659 & 0.354 & 0.538 & 0.165 & 0.251 & 246 & 6.90 \\
Qwen3-8B
& MATH-500
& 0.858 & 0.168 & 0.196 & 0.068 & 0.079 & 451 & 0.80 \\
Qwen3-8B
& OlympiadBench
& 0.669 & 0.230 & 0.344 & 0.096 & 0.144 & 569 & 0.76 \\
Qwen3-8B
& DeepScaler
& 0.643 & 0.212 & 0.329 & 0.105 & 0.163 & 622 & 2.03 \\
\midrule
Qwen3-30B-A3B
& AIME
& 0.777 & 0.361 & 0.464 & 0.188 & 0.242 & 419 & 119.90 \\
Qwen3-30B-A3B
& MATH-500
& 0.910 & 0.170 & 0.187 & 0.082 & 0.090 & 158 & 0.80 \\
Qwen3-30B-A3B
& OlympiadBench
& 0.754 & 0.309 & 0.409 & 0.171 & 0.226 & 840 & 0.79 \\
Qwen3-30B-A3B
& DeepScaler
& 0.692 & 0.242 & 0.350 & 0.135 & 0.196 & 494 & 105.58 \\
\midrule
R1-Distill-Qwen-32B
& AIME
& 0.671 & 0.408 & 0.608 & 0.250 & 0.373 & 673 & 0.77 \\
R1-Distill-Qwen-32B
& MATH-500
& 0.846 & 0.160 & 0.189 & 0.094 & 0.111 & 481 & 0.82 \\
R1-Distill-Qwen-32B
& OlympiadBench
& 0.613 & 0.234 & 0.383 & 0.126 & 0.206 & 1118 & 0.80 \\
R1-Distill-Qwen-32B
& DeepScaler
& 0.622 & 0.227 & 0.365 & 0.135 & 0.218 & 912 & 0.82 \\
\midrule
\multicolumn{2}{l}{Mean $\pm$ std}
& 0.726
& $0.256 \pm 0.079$ & $0.363 \pm 0.127$
& $0.135 \pm 0.050$ & $0.192 \pm 0.078$
& & \\
\bottomrule
\end{tabular}
\caption{
Per-cell accuracy-relevant commit errors under two denominator conventions,
together with empirical relaxed-null constants $c_+$ and $c_-$.
``FP / All'' divides the accuracy-relevant early-commit count by all
trajectories in the cell, whereas ``FP / Final-Correct'' conditions the
denominator on trajectories whose full-budget terminal answer is correct.
}
\label{tab:per_cell_errors}
\end{table*}

\section{Statement on the Use of Large Language Model}
During the preparation of this work, the authors used a large language model for language editing assistance, including grammar, spelling, and phrasing. All content was reviewed and edited by the authors, who take full responsibility for the manuscript.

\section{License of Existing Assets}
The models used in this work are released under permissive open-source licenses:
Qwen3-8B and Qwen3-30B-A3B-Instruct-2507 are released under the Apache 2.0 license,
and DeepSeek-R1-Distill-Qwen-32B is released under the MIT license. The benchmark
datasets are used under their respective distribution terms: MATH-500 is derived
from the MATH benchmark and the PRM800K MATH split, whose public repositories are
MIT-licensed; OlympiadBench is distributed on Hugging Face under Apache 2.0
(with the official evaluation repository released under MIT); and DeepScaleR is
released under the MIT license. For AIME, we use publicly available historical
competition problems for academic evaluation only. All models and benchmark
assets are used solely for research evaluation and in a manner consistent with
their stated licenses and intended academic use.

\end{document}